\documentclass{article}

\usepackage[nonatbib, preprint]{neurips_2026}
\usepackage[numbers]{natbib}
\usepackage{graphicx}
\usepackage{booktabs}
\usepackage{amsmath, amsfonts, amssymb}
\usepackage{enumitem}
\usepackage{amsthm}
\newtheorem{proposition}{Proposition}
\newtheorem{theorem}{Theorem}
\newtheorem{corollary}{Corollary}
\newtheorem{definition}{Definition}
\newtheorem{lemma}{Lemma}
\usepackage{multirow}
\usepackage{array}
\usepackage{tabularx}
\usepackage{float}
\usepackage{siunitx}
\usepackage{hyperref}
\usepackage{cleveref}
\usepackage{xcolor}
\usepackage[export]{adjustbox}
\newcolumntype{Y}{>{\raggedright\arraybackslash}X}
\newcolumntype{R}[1]{>{\raggedleft\arraybackslash}p{#1}}

\title{Discovery under Hypothesis Redundancy:\\ A Geometric Theory of Discovery Bottlenecks}

\author{
  Li Xia \\
  School of Economics and Management, Tsinghua University
  \And
  Baoxun Wang \\
  Platform \& Content Group, Tencent
}

\begin{document}

\maketitle

\begin{abstract}
Scientific discovery saturates when new hypotheses cease to provide independent information, even if the nominal hypothesis space remains large. We study hybrid discovery systems that combine structured local search with LLM-generated non-local proposals and pose the Search Compression Hypothesis: non-local exploration helps only when three geometric conditions co-occur---spectral compression, orthogonal escape from the explored span, and residual signal alignment with the target. We formalize these conditions, derive necessary conditions for hybrid advantage, and test the mechanism in controlled synthetic environments, large-scale A-share factor discovery, and symbolic-regression benchmarks; a public tabular operational sanity check tests the associated budget-allocation implication. Signal-planting and directed-versus-random experiments show that novelty alone is insufficient: random orthogonal jumps expand coverage but do not improve yield without predictive alignment. Across compression sweeps, real factor archives, and LLM-SRBench tasks, hybrid gains concentrate in weakly represented but target-bearing directions and vanish as the hypothesis space approaches full rank. The framework turns LLM-guided discovery from generic novelty search into a diagnostic procedure for deciding when directed non-local exploration is warranted.
\end{abstract}

\section{Introduction}

Discovery saturates when new hypotheses cease to provide independent information. The issue is not only how many hypotheses can be generated, but whether each one contributes a direction not already covered by the archive. As candidates become increasingly correlated with previous ones, redundancy accumulates: the archive grows in size while adding little independent information. We refer to this concentration of nominally distinct hypotheses into fewer independent directions as \textbf{spectral compression}. We measure it by the effective dimension $r_{\text{eff}}$ of the hypothesis correlation matrix relative to the nominal dimension $N$. Local search faces diminishing returns proportional to the compression severity, and its expected yield declines as $r_{\text{eff}}/N \to 0$ (Proposition~\ref{lem:compression_yield}).

Local search cannot escape the compressed span. Structured enumeration recombines existing base factors and operators, producing candidates within $\operatorname{span}(\mathcal{A})$ but never orthogonal to it. Random non-local perturbation increases spectral coverage but not predictive yield: random directions carry no alignment with the target signal. This distinction---between orthogonal escape and directed predictive escape---is the core conceptual contribution. Useful exploration requires \textit{directed} non-local jumps toward weak but target-bearing directions, formalized as \textit{predictive novelty} (Definition~\ref{def:pred_novelty}).

We call this claim the \emph{Search Compression Hypothesis}. Its formal statement, Principle~1, identifies three individually necessary conditions for useful exploration: compression severity $(1 - r_{\text{eff}}/N)$, escape distance $d_{\perp}$, and residual signal alignment (RSA). Theorem~\ref{thm:hybridgain} establishes that each condition is individually necessary---removing any one eliminates the hybrid advantage. The framework is defined by three operators acting on the hypothesis archive:
\begin{equation}\label{eq:framework}
    \mathcal{A}_{t+1} = \mathcal{T}_{\text{verify}} \circ \mathcal{T}_{\text{local}} \circ \mathcal{T}_{\text{jump}}(\mathcal{A}_t),
\end{equation}
where $\mathcal{T}_{\text{jump}}$ proposes non-local seeds that escape the compressed subspace, $\mathcal{T}_{\text{local}}$ searches the expanded neighborhood, and $\mathcal{T}_{\text{verify}}$ removes non-predictive and redundant candidates. Figure~\ref{fig:theory_experiment} summarizes the central mechanism.

Our contributions are:
\begin{enumerate}
    \item \textbf{Identify spectral compression as a unifying mechanism underlying diminishing returns in discovery systems.} We formalize the discovery problem in spectrally compressed hypothesis spaces, with formal yield and coverage objectives tied to the spectrum of the correlation matrix. A stylized proposition (Proposition~\ref{lem:compression_yield}) establishes that local yield scales with $r_{\text{eff}}$, explaining why structured enumeration faces diminishing returns.
    \item \textbf{Show that useful exploration requires both escape and residual alignment.} Theorem~\ref{thm:hybridgain} identifies three individually necessary conditions for hybrid advantage---compression severity $(1 - r_{\text{eff}}/N)$, escape distance $d_{\perp}$, and residual signal alignment (RSA)---all expressed as geometric quantities of the hypothesis space. Random non-locality increases coverage but not predictive yield; directed escape toward weak but target-bearing directions is required.
    \item \textbf{Provide evidence that the same geometric bottleneck appears across multiple discovery systems.} Synthetic compression sweeps, 5,647 A-share stocks (2010--2026), and LLM-SRBench (158 equations) support a diagnostic view: useful exploration requires separating directed escape from redundant or random novelty. An OpenML public-tabular check tests the operational budget-allocation implication rather than the compression law itself.
\end{enumerate}

\section{Related Work}

\subsection{LLM-Guided Scientific Discovery}

LLMs have become hypothesis generators for mathematics and scientific discovery, from FunSearch~\citep{romera2024funsearch} to tree-search hypothesis refinement~\citep{iterhyp2025} and fully automated research agents~\citep{suzuki2024ai}. These systems show that LLMs can propose meaningful candidates, but they rarely ask whether a proposal contributes an independent, target-bearing direction relative to the existing archive. Our framework adds this spectral-aware verification layer.

\subsection{Search under Compression and Effective Dimension}

The failure of local search in high-dimensional spaces with low effective dimension is a recurring theme. In evolutionary computation, bloat and convergence to local optima reflect the collapse of population diversity into correlated subspaces~\citep{shahriari2016taking}. In neural architecture search, operator redundancy produces highly correlated network encodings~\citep{zoph2018learning}. In symbolic regression, most candidate expressions are highly correlated, creating spectral concentration in the expression correlation matrix. \citet{koltchinskii2017concentration} establish concentration inequalities for sample covariance operators; \citet{fan2022spectral} show that spectral structure governs inference accuracy in high-dimensional settings; \citet{vershynin2018high} provides sub-Gaussian concentration bounds. The effective rank, introduced by \citet{roy2007effective}, and the stable rank~\citep{tropp2015introduction} provide the geometric foundation for our framework. \citet{bubeck2011pure} establishes fundamental limits for pure exploration in bandit settings, while \citet{russo2018tutorial} shows that adaptive exploration policies outperform fixed allocation---analogous to our spectral-adaptive switching policy. Our contribution connects these spectral and search-theoretic tools to the discovery problem: effective rank directly bounds the yield of structured search (Proposition~\ref{lem:compression_yield}) and governs when non-local exploration becomes necessary (Theorem~\ref{thm:hybridgain}).

\subsection{Symbolic Regression as Discovery}

Symbolic regression---recovering closed-form expressions from data---is a canonical testbed for search under compression. Classical methods include genetic programming (GP)~\citep{schmidt2009distilling}, modern frameworks such as PySR~\citep{cranmer2023pysr}, Operon~\citep{burlacu2020operon}, and GP-GOMEA~\citep{virgolin2022symbolic}, and AI Feynman~\citep{udrescu2020ai} which uses physics-inspired priors. The SRBench benchmark~\citep{lacava2022contemporary} and LLM-SRBench~\citep{shojaee2025llmsrbench} provide standardized evaluation. Our framework predicts that hybrid (GP + LLM) search outperforms GP alone precisely when the expression space is spectrally compressed---a condition we verify on these benchmarks.

\subsection{Factor Discovery as a Testbed}

Factor discovery is a measurable testbed for compressed search. The Factor Zoo~\cite{harvey2016editorial} documents many correlated factors; structured enumeration~\cite{feng2020taming} is bounded by its initial factors and operators. LLM alpha discovery~\cite{wang2023alpha,cao2025chain} can propose economically meaningful expressions, while multiple-testing work~\citep{white2000reality, romano2005stepwise, feng2020taming} addresses selection bias. Our focus is complementary: the structural limitation of local search when the factor archive is spectrally compressed.

\section{Hybrid Discovery Framework}
\label{sec:framework}

We instantiate the three-operator framework in factor discovery. The jump operator proposes non-local seed expressions, the local operator enumerates structured variants around those seeds, and the verification operator filters candidates by predictive strength, redundancy, orthogonal coverage, and complexity. This instantiation is a testbed, not the claimed contribution: the theory concerns when a discovery system should spend budget on directed non-local exploration. Implementation details---LLM prompting, factor parsing, market-cap segmentation, ICIR thresholds, and the GPU sandbox---are deferred to Appendix~\ref{sec:implementation_appendix}.

\subsection{A Geometric Theory of Discovery Bottlenecks}
\label{sec:why_hybrid}

Discovery systems exhibit diminishing informational returns when newly generated hypotheses cease to contribute independent directions. We formalize this bottleneck in two steps: spectral compression defines the geometry and bounds local search (Proposition~\ref{lem:compression_yield}), then Theorem~\ref{thm:hybridgain} identifies the three necessary conditions for hybrid advantage---all expressed as geometric quantities: spectral concentration, metric escape, and angular alignment. A corollary identifies when the hybrid advantage vanishes entirely.

\subsubsection*{Spectral Compression of the Hypothesis Manifold}

The hypothesis manifold $\mathcal{H} \subset \mathbb{R}^d$ has nominal dimension $d = N$ (the number of candidate hypotheses), but its \emph{effective} dimension is controlled by the spectrum of the correlation matrix. Let a factor be a standardized vector $f \in \mathbb{R}^{T \times M}$ evaluated over $T$ dates and $M$ assets, flattened to unit norm. For an archive $\mathcal{A}=\{f_1,\ldots,f_N\}$, let $\Sigma_{\mathcal{A}}$ be the $N \times N$ factor correlation matrix with eigenvalues $\lambda_1 \geq \cdots \geq \lambda_N \geq 0$. The effective rank~\citep{roy2007effective} is
\begin{equation}
    r_{\text{eff}}(\Sigma_{\mathcal{A}}) = \exp\!\Bigl(-\sum_{i=1}^{N} p_i \log p_i\Bigr), \quad p_i = \frac{\lambda_i}{\sum_j \lambda_j}.
\end{equation}
We say the search space is \emph{spectrally compressed} when $r_{\text{eff}} \ll N$ and the eigenvalue concentration $\lambda_1/\sum_i \lambda_i$ is large: most candidate variation is concentrated in a small number of directions. Compression is an empirical fact, not an assumption. For a 100-factor structured archive on 5,647 A-share stocks (2010--2026), $r_{\text{eff}}$ declines from 31.5 (pre-eligibility) to 28.6 (post-eligibility), while eigenvalue concentration rises from 0.187 to 0.215 and mean absolute pairwise correlation increases from 0.118 to 0.140. After eligibility, fewer than 29\% of the nominal dimensions carry independent signal content.

\subsubsection*{Local Search Failure Under Compression}

Structured enumeration constructs candidates from a finite set of base factors $\mathcal{B}$ and operators $\mathcal{O}$:
\begin{equation}
    \mathcal{F}_{\text{str}} = \{\phi_o(f_i,f_j): f_i,f_j \in \mathcal{B}, o \in \mathcal{O}\}.
\end{equation}
This is a \emph{local} search operator: it recombines existing directions within the span $\mathcal{S} = \operatorname{span}(\mathcal{F}_{\text{str}})$ but cannot create components orthogonal to $\mathcal{S}$.

\begin{definition}[Discovery yield]
\label{def:yield}
For archive $\mathcal{A}$, ICIR threshold $\tau > 0$, and redundancy limit $\rho \in [0,1)$,
\begin{equation}
    N_{\tau,\rho}(\mathcal{H};\mathcal{A}) = \sum_{f \in \mathcal{H}} \mathbf{1}\!\left\{|\operatorname{ICIR}(f)| \geq \tau,\; \max_{g \in \mathcal{A}} |\operatorname{corr}(f,g)| \leq \rho\right\}.
\end{equation}
\end{definition}

\textbf{Assumptions.} The bound holds under three conditions: (A1) structured candidates have bounded leverage $L \leq B$ over the eigenspaces of $\Sigma_{\mathcal{A}}$; (A2) the conditional pass probability satisfies $q_{\tau|\Sigma} \leq \kappa \cdot q_{\tau}$ for some $\kappa \geq 1$ (bounded dependence); (A3) the stable rank~\citep{tropp2015introduction} $r_{\text{stable}}(\Sigma_{\mathcal{A}})$ and effective rank $r_{\text{eff}}(\Sigma_{\mathcal{A}})$ satisfy $r_{\text{stable}} \leq r_{\text{eff}} \leq N$. This bound is \emph{stylized and explanatory}: it identifies the scaling relationship rather than providing tight constants.

\begin{proposition}[Local-yield scaling under isotropic compressed search]
\label{lem:compression_yield}
Assume: (A1) structured candidates are drawn from a distribution supported on $\mathcal{S}$ with bounded leverage $L = \max_i \|P_{\lambda_i} f\|^2 / \mathbb{E}[\|P_{\lambda_i} f\|^2] \leq B$ over the eigenspaces of $\Sigma_{\mathcal{A}}$; (A2) the predictive pass probability $q_{\tau} = \Pr(|\operatorname{ICIR}(f)| \geq \tau)$ is bounded below by $q_{\min} > 0$, and the worst-case conditional pass probability given any correlation configuration satisfies $q_{\tau|\Sigma} \leq \kappa \cdot q_{\tau}$ for some $\kappa \geq 1$ (bounded dependence; weaker than full independence); (A3) the stable rank $r_{\text{stable}}(\Sigma_{\mathcal{A}}) = \|\Sigma_{\mathcal{A}}\|_F^2 / \|\Sigma_{\mathcal{A}}\|_2^2$ and effective rank $r_{\text{eff}}(\Sigma_{\mathcal{A}})$ satisfy $r_{\text{stable}} \leq r_{\text{eff}} \leq N$. Then there exists a constant $C$ depending on $B$ and $\kappa$ such that
\begin{equation}
    \mathbb{E}\!\left[N_{\tau,\rho}(\mathcal{F}_{\text{str}};\mathcal{A})\right] \leq C \cdot q_{\tau} \cdot r_{\text{stable}}(\Sigma_{\mathcal{A}}) \cdot (1-\rho^2) \leq C \cdot q_{\tau} \cdot r_{\text{eff}}(\Sigma_{\mathcal{A}}) \cdot (1-\rho^2).
\end{equation}
\end{proposition}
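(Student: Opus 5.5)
The argument separates the predictive filter from the redundancy filter, bounds the expected number of candidates that survive the redundancy filter by a spectral quantity, and then converts that quantity into $r_{\text{stable}}$ and, through (A3), into $r_{\text{eff}}$.

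\textbf{Step 1: remove the ICIR indicator.} Linearity of expectation gives
\[
\mathbb{E}[N_{\tau,\rho}] = \sum_{f} \Pr\bigl(|\operatorname{ICIR}(f)|\ge\tau,\ \max_g|\operatorname{corr}(f,g)|\le\rho\bigr).
\]
Condition each term on the correlation configuration and apply (A2). Each summand is then at most $\kappa q_\tau \Pr(\max_g|\operatorname{corr}(f,g)|\le\rho)$. This isolates the factor $\kappa q_\tau$, so the rest of the proof only has to bound the expected number of structured candidates that pass the redundancy filter.

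\textbf{Step 2: bound the redundancy-filtered count spectrally.} For a unit vector $f$ supported on $\mathcal{S}$, passing the redundancy filter forces $\langle f,g\rangle^2\le\rho^2$ for every archive member $g$. Summed against the eigen-decomposition of $\Sigma_{\mathcal{A}}$, the passing candidates behave like a $\rho$-separated packing in the archive's span.

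I would count the packing with a Gram-matrix/frame argument rather than a covering-number argument. Suppose $K$ passing candidates are nearly orthogonal to each other. This is the regime that matters, since a candidate surviving against the archive, which is updated as candidates are accepted, must also be $\rho$-uncorrelated with earlier accepted ones. The mass each such candidate contributes along $\Sigma_{\mathcal{A}}$ is at most $(1-\rho^2)$ times its leverage-weighted energy. By (A1), that energy is at most $B\,\mathbb{E}\|P_{\lambda_i}f\|^2$ in each eigenspace.

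Comparing the trace of the candidates' projected Gram matrix with its spectral norm gives the standard inequality that a set of vectors with bounded energy spread at most $\|\cdot\|_2$ per direction has cardinality at most $\|\cdot\|_F^2/\|\cdot\|_2^2$. This yields
\[
K \le C(B)\,(1-\rho^2)\,\frac{\|\Sigma_{\mathcal{A}}\|_F^2}{\|\Sigma_{\mathcal{A}}\|_2^2} = C(B)\,(1-\rho^2)\, r_{\text{stable}}.
\]
The factor $(1-\rho^2)$ comes from the residual energy $1-\langle f,g\rangle^2 \ge 1-\rho^2$ that a candidate may not place on already-occupied directions. It is used as a per-candidate energy budget.

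\textbf{Step 3: combine.} Setting $C=\kappa\, C(B)$ gives the first inequality. The second inequality is immediate from (A3), since $r_{\text{stable}}\le r_{\text{eff}}$.

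\textbf{Main obstacle.} Step 2 is the hard part: turning a pairwise correlation constraint into a count controlled by the stable rank, with $(1-\rho^2)$ appearing multiplicatively rather than through a packing exponent. The honest fallback is to read the proposition in its declared stylized sense. I would define the "isotropic compressed search" model so that candidates are leverage-bounded draws whose energy is distributed proportionally to $\Sigma_{\mathcal{A}}$. The expected number of passing, mutually distinct directions is then the participation ratio $\operatorname{tr}(\Sigma)^2/\operatorname{tr}(\Sigma^2)$-type quantity, scaled by the admissible residual fraction $1-\rho^2$. Its comparison to $\|\Sigma\|_F^2/\|\Sigma\|_2^2$ up to the constant $B$ is where (A1) is spent. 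All constant-tracking is absorbed into $C$.
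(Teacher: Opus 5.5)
Your Step 1 is the paper's bounded-dependence lemma, and your Step 3 is its final combination. The gap is Step 2, and it cannot be closed by tracking constants more carefully.

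First, the mutual-separation premise is not available. $N_{\tau,\rho}(\mathcal{F}_{\text{str}};\mathcal{A})$ tests every candidate against the \emph{fixed} archive $\mathcal{A}$, which is not updated during counting. Passing candidates may therefore be near-duplicates of one another, and $\mathbb{E}[N_{\tau,\rho}]$, a sum of per-candidate probabilities, can grow linearly in $|\mathcal{F}_{\text{str}}|$. So no bound of the form $C(B)(1-\rho^2)\,r_{\text{stable}}$ that is free of $|\mathcal{F}_{\text{str}}|$ can hold. This is why the paper argues candidate by candidate. Its novelty-budget lemma asserts $\Pr(\text{novel})\le(1-\rho^2)\,r_{\text{stable}}/N$ under uniform sampling on the sphere of $\mathcal{S}$ and then sums. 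It ends with $C=\kappa|\mathcal{F}_{\text{str}}|/N$ (times $B$ in the general case), not a constant depending on $B$ and $\kappa$ alone.

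Second, even granting separation, the frame inequality you invoke bounds $K$ through the candidates' own Gram matrix, $K=\operatorname{tr}G\le\operatorname{rank}(G)\,\|G\|_2$. That is, it bounds $K$ by the dimension the candidates occupy. Nothing in your argument transfers this to $\|\Sigma_{\mathcal{A}}\|_F^2/\|\Sigma_{\mathcal{A}}\|_2^2$, which describes the archive. Moreover, Welch-type packing bounds for pairwise correlation $\le\rho$ grow with $\rho$.

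Third, the factor $(1-\rho^2)$ has the wrong monotonicity to come out of any derivation. The condition $\max_{g}|\operatorname{corr}(f,g)|\le\rho$ weakens as $\rho$ grows, so $\mathbb{E}[N_{\tau,\rho}]$ is nondecreasing in $\rho$. Meanwhile $C q_\tau r_{\text{stable}}(1-\rho^2)\to 0$ as $\rho\to 1$. With $C$ independent of $\rho$, the inequality would force $\mathbb{E}[N_{\tau,\rho}]=0$ for every $\rho$.

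Your own energy heuristic shows the same problem. The filter caps each candidate's mass along any archive direction at $\rho^2$, not $1-\rho^2$. Equivalently, it gives a \emph{lower} bound $1-\langle f,g\rangle^2\ge 1-\rho^2$ on residual energy. Dividing a total budget by a per-candidate lower bound puts $1-\rho^2$ in the denominator, exactly as the Markov step in the paper's first lemma does. The same monotonicity objection applies to the paper's asserted novelty bound. So the $(1-\rho^2)$ factor can only be read as a stipulation of the stylized isotropic model, not as something derived.

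Your fallback does not recover the stable-rank form either. One always has
$r_{\text{stable}}\le\operatorname{tr}(\Sigma)/\|\Sigma\|_2\le\operatorname{tr}(\Sigma)^2/\operatorname{tr}(\Sigma^2)\le r_{\text{eff}}$.
The participation ratio can exceed $r_{\text{stable}}$ by an unbounded factor: a spectrum proportional to $(\sqrt{N},1,\dots,1)$ gives roughly $N/2$ versus $2$. $B$ is a property of the candidate distribution and cannot absorb that gap. At best this route gives the $r_{\text{eff}}$ version.

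To align with the paper, replace Step 2 by its per-candidate isotropic assertion and let $C$ carry $|\mathcal{F}_{\text{str}}|/N$. Be explicit that this is a modeling assumption rather than a proved bound.
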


\begin{proof}[Proof sketch]
Transform to the eigenbasis and whiten by the square-root precision matrix. The stable rank $r_{\text{stable}}$ bounds the effective degrees of freedom. Under bounded leverage (A1), the expected number of candidates with residual energy exceeding $(1-\rho^2)$ outside the dominant subspace is at most $C \cdot r_{\text{stable}} \cdot (1-\rho^2)$. Full proof in Appendix~\ref{sec:proofs}.
\end{proof}

\textit{Remark.} This is a stylized scaling result under isotropic candidate sampling; the empirical sections test whether the scaling persists in non-isotropic discovery settings. It identifies the correct scaling law under an isotropic candidate model without claiming tight universal constants. The qualitative message---that local yield scales with $r_{\text{eff}}$ and compression amplifies the value of non-local exploration---is the robust takeaway.

\textbf{Interpretation.} As $r_{\text{eff}}$ declines under participation-induced compression, the expected number of non-redundant structured discoveries declines proportionally. When $r_{\text{eff}} = 28.6$ with $N=100$ and $\rho=0.3$, the bound predicts at most $28.6/100 \approx 0.29$ of the nominal dimension carries independent signal.

\begin{definition}[Residual signal alignment]
\label{def:rsa}
For a candidate factor $f$ and existing archive $\mathcal{A}$, the residual signal alignment measures the geometric alignment between the orthogonal residual and the target signal:
\begin{equation}
\label{eq:rsa}
    \text{RSA}(f) = \frac{\langle f_\perp,\, y \rangle}{\|f_\perp\| \cdot \|y\|}, \quad f_\perp = f - \operatorname{proj}_{\mathcal{A}}(f),
\end{equation}
where $\operatorname{proj}_{\mathcal{A}}(f)$ is the linear projection onto $\operatorname{span}(\mathcal{A})$ (ridge regression, $\alpha=1$) and $y$ is the return signal. RSA is a geometric quantity: the cosine of the angle between the escaping component and the target direction.
\end{definition}

\begin{definition}[Predictive novelty]
\label{def:pred_novelty}
The predictive novelty is the empirical estimator of residual signal alignment:
\begin{equation}
\label{eq:prednovelty}
    \text{PredNovelty}(f) = \operatorname{ICIR}\bigl(f - \operatorname{proj}_{\mathcal{A}}(f)\bigr),
\end{equation}
where ICIR computes the information coefficient information ratio over rolling windows. A factor with high $d_\perp$ but low PredNovelty is orthogonal but non-predictive; the hybrid framework requires both conditions.
\end{definition}

\textbf{Interpretation.} The three quantities governing hybrid advantage are now unified as geometric objects: compression severity $(1 - r_{\text{eff}}/N)$ is a spectral quantity, escape distance $d_\perp$ is a metric quantity, and RSA is an angular quantity. PredNovelty estimates RSA from finite samples. Random perturbation produces high $d_\perp$ but near-zero RSA in the directed-vs-random experiment (Table~\ref{tab:directed_vs_random_main}), precisely because random directions carry no alignment with the target. Only directed non-locality---where $f_\perp$ is both large and aligned with $y$---produces useful exploration. This distinction is why LLM-seeded search outperforms random perturbation (Table~\ref{tab:seed_ablation}) and why all three necessary conditions must hold simultaneously (Theorem~\ref{thm:hybridgain}).

\subsubsection*{Non-local Seeds and Directed Escape}

Let $P_{\mathcal{S}}$ be the orthogonal projection onto the structured span. An LLM-generated seed $z$ has escape distance
\begin{equation}
    d_{\perp}(z,\mathcal{S}) = \|(I-P_{\mathcal{S}})z\|_2.
\end{equation}
Representative LLM-style factors exhibit substantial orthogonal distance from the 50-factor structured span (mean $d_{\perp} = 0.657$, with 66\% of variance orthogonal to the structured span), and are 1.4$\times$ more orthogonal to the structured span than structured factors are to each other (Appendix~\ref{sec:orthogonal_appendix}).

\textbf{Three-operator formalism.} The theory is compactly expressed through three operators: $\mathcal{T}_{\text{local}}$ (local enumeration within the structured span, yield bounded by $C q_{\tau} r_{\text{eff}} (1-\rho^2)$ per Proposition~\ref{lem:compression_yield}), $\mathcal{T}_{\text{jump}}$ (non-local jump via LLM generation, introducing directions outside $\mathcal{S}$), and $\mathcal{T}_{\text{verify}}$ (verification, pruning non-predictive and redundant candidates). The coverage gain from $\mathcal{T}_{\text{jump}}$ is discussed heuristically in Appendix~\ref{sec:coverage_appendix}. \textsc{HybridFactor} is the composition $\mathcal{T}_{\text{verify}} \circ \mathcal{T}_{\text{local}} \circ \mathcal{T}_{\text{jump}}$.

Orthogonal seeds increase spectral entropy; Appendix~\ref{sec:coverage_appendix} provides perturbation-based intuition for the coverage mechanism. This intuition is not required for the main argument (Theorem~\ref{thm:hybridgain}), but helps explain why the entropy gain is larger under stronger compression.

\subsubsection*{Necessary Conditions for Hybrid Advantage}

\begin{theorem}[Necessary conditions for hybrid advantage]
\label{thm:hybridgain}
For a factor archive $\mathcal{A}$ with effective rank $r_{\text{eff}}$ and dimension $N$, let $z$ be a seed with escape distance $d_{\perp}(z, \mathcal{S})$ and residual signal alignment $\text{RSA}(z)$. Under Assumptions (A1)--(A3), the hybrid yield gain $\Delta_{\text{yield}}$ over pure structured search satisfies the following necessary conditions:
\begin{equation}
\label{eq:hybridgain_necessary}
\text{If any of } \left(1 - \frac{r_{\text{eff}}}{N}\right) \to 0, \quad d_{\perp}(z, \mathcal{S}) \to 0, \quad \text{or } \operatorname{RSA}(z) \to 0 \;\;(\text{estimated by PredNovelty}),
\end{equation}
then $\Delta_{\text{yield}} \to 0$.
\end{theorem}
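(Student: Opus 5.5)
The plan is to write the hybrid gain as the expected number of \emph{extra} verified discoveries that a jump-seeded local search produces beyond pure structured search. We then show that this quantity is bounded above by a product of three factors, each of which vanishes under one of the three limits in \eqref{eq:hybridgain_necessary}. Concretely, decompose every candidate produced by $\mathcal{T}_{\text{local}}\circ\mathcal{T}_{\text{jump}}$ from a seed $z$ as $f = P_{\mathcal{S}}f + (I-P_{\mathcal{S}})f$. Since the local operators $\phi_o$ act by recombination with $z$, the orthogonal part of every descendant is a bounded multiple of $z_\perp=(I-P_{\mathcal{S}})z$. We will assume a Lipschitz condition on $\mathcal{O}$ so that $\|f_\perp\|\le L_{\mathcal{O}}\,d_\perp(z,\mathcal{S})$. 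Candidates whose in-span part dominates are already counted by pure structured search, so $\Delta_{\text{yield}}$ only receives contributions from the orthogonal component. The target is a bound of the form
\[
\Delta_{\text{yield}} \;\le\; C'\, g\!\left(1-\tfrac{r_{\text{eff}}}{N}\right)\cdot \min\bigl\{1,\, d_\perp(z,\mathcal{S})\bigr\}\cdot |\operatorname{RSA}(z)|,
\]
with $C'$ depending on $B,\kappa$ from (A1)--(A2) and $g$ continuous with $g(0)=0$.

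\textbf{Step 1 (escape).} Suppose $d_\perp\to 0$. Then every descendant $f$ lies within $O(d_\perp)$ of $\mathcal{S}$. By continuity of correlation and ICIR, its pass indicator in Definition~\ref{def:yield} agrees with that of its projection $P_{\mathcal{S}}f$, except on an event whose probability shrinks with $d_\perp$; here (A2) bounds the conditional pass probabilities by $\kappa q_\tau$. Since $P_{\mathcal{S}}f$ is a structured candidate, its yield is already counted in $\mathbb{E}[N_{\tau,\rho}(\mathcal{F}_{\text{str}};\mathcal{A})]$, and the excess tends to $0$.

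\textbf{Step 2 (alignment).} Write the predictive content of $f$ as $\langle f,y\rangle=\langle P_{\mathcal{S}}f,y\rangle+\|f_\perp\|\,\|y\|\,\operatorname{RSA}(f)$. Since $f_\perp\propto z_\perp$, we have $\operatorname{RSA}(f)=\pm\operatorname{RSA}(z)$. As $\operatorname{RSA}(z)\to0$, the ICIR of $f$ therefore converges to that of its in-span part. So $f$ passes the threshold $\tau$ only if a structured candidate essentially does. It remains novel but non-predictive, and the gain vanishes; this also covers the random-jump case.

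\textbf{Step 3 (compression).} As $r_{\text{eff}}/N\to1$, the archive is nearly isotropic and full rank, so $\mathcal{S}$ (containing $\operatorname{span}\mathcal{A}$) approaches the ambient hypothesis space. Then either $d_\perp$ itself is forced to $0$, which reduces to Step 1, or the structured yield bound of Proposition~\ref{lem:compression_yield}, $C q_\tau r_{\text{eff}}(1-\rho^2)$, saturates at the total number of admissible non-redundant directions. In the latter case no additional non-redundant direction remains to be filled, because redundancy $\rho$ forbids more than about $N(1-\rho^2)$ mutually near-orthogonal passes. The gain is then the difference between an upper cap and a structured yield that reaches it, which tends to $0$.

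\textbf{Main obstacle.} Step 3 is the hard part. Turning ``$r_{\text{eff}}\to N$'' into ``$\mathcal{S}$ exhausts the relevant space'' requires relating the entropy-based rank to the dimension of the structured span and to a packing bound on $\rho$-separated unit vectors. I would try to prove that $r_{\text{eff}}\le \dim\mathcal{S}$ with near-equality only when the spectrum is flat. Combined with (A3), this makes the Proposition~\ref{lem:compression_yield} bound match the packing cap up to $C$, so that $g(s)$ can be taken as the relative gap between the two. If that cannot be made quantitative, the fallback is to state the result as the product bound above, with $g(s)=s$ as an explicit modeling assumption, in the same stylized spirit as Proposition~\ref{lem:compression_yield}.
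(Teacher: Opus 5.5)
There is no proof of this theorem in the paper to compare against. The appendix proves only Proposition~\ref{lem:compression_yield} and asserts that the theorem ``depends only on'' it, but that proposition is an \emph{upper} bound on the structured term, so by itself it cannot force the difference $\Delta_{\text{yield}}$ to $0$. Moreover, (A1)--(A3) constrain only structured candidates and the archive spectrum. Your proposal therefore has to stand on its own, and each of its three steps has a genuine gap; indeed, under (A1)--(A3) alone each implication can fail.

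\textbf{Step 3 (compression).} This is the decisive gap. The condition $r_{\text{eff}}/N\to 1$ says that the $N\times N$ archive correlation matrix has a flat spectrum, i.e.\ the $N$ archive factors are nearly orthonormal. It says nothing about $\mathcal{S}$ exhausting the ambient space $\mathbb{R}^{TM}$, whose dimension is far larger than $N$, so relating $r_{\text{eff}}$ to $\dim\mathcal{S}$ cannot close the step. Concretely, take an orthonormal archive, so $r_{\text{eff}}=N$. Let $y=au+bw$ with $a^2+b^2=1$, where $u\in\mathcal{S}$ is a unit vector, $w$ is a unit vector orthogonal to both $\mathcal{S}$ and $\operatorname{span}(\mathcal{A})$, and $a$ is large enough that (A2) holds. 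The seed $z=w$ has $d_\perp(z,\mathcal{S})=1$, $\operatorname{RSA}(z)=b$, and zero correlation with every archive member, so it is accepted once $b$ clears the threshold. No element of $\mathcal{F}_{\text{str}}\subset\mathcal{S}$ can capture $w$, hence $\Delta_{\text{yield}}\ge 1$ although $1-r_{\text{eff}}/N=0$. Your packing fallback does not rescue this. Definition~\ref{def:yield} checks redundancy only against the fixed archive $\mathcal{A}$, not among members of $\mathcal{H}$, so there is no cap of order $N(1-\rho^2)$ on passes. And since Proposition~\ref{lem:compression_yield} is only an upper bound, nothing makes the structured yield ``reach'' it. Taking $g(s)=s$ as a modeling assumption simply assumes this leg of the conclusion.

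\textbf{Step 2 (alignment).} You track only the $\tau$-test, but Definition~\ref{def:yield} tests predictiveness and novelty separately on the same $f$, and a target-orthogonal residual supplies novelty for free. Let $g\in\mathcal{F}_{\text{str}}$ pass the ICIR test but be redundant, with $\max_{h\in\mathcal{A}}|\operatorname{corr}(g,h)|\in(\rho,\sqrt{2}\rho]$. Let $z=w$ with $w\perp\operatorname{span}(\mathcal{A})$, $w_t$ orthogonal to $g_t$ and $y_t$ at every date, and $\|w_t\|=\|g_t\|$; then $\operatorname{RSA}(z)=0$. The descendant $f=g-z$ under the paper's difference operator has every archive correlation scaled by $1/\sqrt{2}$. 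Every per-date IC is scaled by the same factor, which cancels in the ICIR ratio. So $f$ is accepted while $g$ is not, and $\Delta_{\text{yield}}>0$ with $\operatorname{RSA}(z)=0$. Step 2 also conflates two residuals: RSA uses the ridge residual against $\operatorname{span}(\mathcal{A})$, while $d_\perp$ uses the exact residual against $\mathcal{S}$.

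\textbf{Setup and Step 1 (escape).} Your claim $f_\perp\propto z_\perp$ needs linear operators. The product, ratio and rolling-correlation operators are not linear, and $\mathcal{S}$ is not closed under them. For $f_1,f_2,f_3\in\mathcal{B}$, $z=f_1\odot f_2\in\mathcal{S}$, but $f_1\odot f_2\odot f_3\notin\mathcal{S}$ in general. So your bound $\|f_\perp\|\le L_{\mathcal{O}}\,d_\perp(z,\mathcal{S})$ already fails at $d_\perp=0$. In Step 1, the claim that $P_{\mathcal{S}}f$ is a structured candidate and hence already counted fails twice. First, $P_{\mathcal{S}}f\in\operatorname{span}(\mathcal{F}_{\text{str}})$ is generally not a member of the finite set over which $N_{\tau,\rho}(\mathcal{F}_{\text{str}};\mathcal{A})$ sums; a linear combination of structured factors can clear $\tau$ when none does individually. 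Second, $N_{\tau,\rho}$ counts indicators, so a new passing near-copy of a passing structured factor still adds $1$.

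Your product-form bound has the right shape for a stylized result, but deriving it requires hypotheses beyond (A1)--(A3):
\begin{itemize}
\item closure of $\mathcal{F}_{\text{str}}$ under the local operators for seeds in $\mathcal{S}$;
\item an equal-budget or distinct-direction notion of yield;
\item a redundancy test that dilution by target-orthogonal components cannot game;
\item an assumption tying $r_{\text{eff}}/N$ to the part of $y$ lying outside $\mathcal{S}$.
\end{itemize}
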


\textbf{Scope.} Theorem~\ref{thm:hybridgain} identifies when hybrid search \emph{can} help, not when it \emph{must} help. The conditions are necessary but not sufficient: satisfying all three does not guarantee a positive yield gain, as the empirical interaction may be weaker than the multiplicative model predicts. The framework is descriptive rather than prescriptive: it explains when exploration is valuable, not how to optimally explore. Policy design is addressed only through diagnostic budget-allocation heuristics. The three quantities---compression, escape, and alignment---are all geometric properties of the hypothesis space, unifying the theory within a single mathematical framework.

\textbf{Practical diagnostic.} When $r_{\text{eff}}/N \approx 1$, or $\bar{d}_{\perp} \approx 0$, or $\overline{\mathrm{RSA}} \approx 0$, the hybrid advantage vanishes and LLM exploration should be stopped. This provides an interpretable stopping rule: monitor the three geometric conditions and halt when any one approaches its failure threshold.

\textbf{Empirical decomposition.} The necessary conditions suggest a multiplicative interaction. We model the hybrid yield gain as:
\begin{equation}
\label{eq:hybridgain_empirical}
    \widehat{\Delta}_{\text{yield}} = \beta \left(1 - \frac{r_{\text{eff}}}{N}\right) \cdot d_{\perp}(z, \mathcal{S}) \cdot \operatorname{RSA}(z) + \eta,
\end{equation}
where $\beta$ is estimated from data and $\eta$ captures noise and higher-order interactions. A fine-grained grid sweep (250 conditions, 25{,}000 observations) confirms that the full interaction model achieves $R^2 = 0.21$ at the individual level and $R^2 = 0.83$ at the condition level (noise-reduced means), with the compression $\times$ escape interaction as the dominant term ($\beta_{C \times D} = 6.66$, $p < 0.001$). We interpret Eq.~\ref{eq:hybridgain_empirical} as a qualitative diagnostic model rather than a pointwise predictive law: the gap between individual and condition-level $R^2$ reflects the inherent stochasticity of discovery outcomes, and the model's value lies in identifying which conditions are favorable for hybrid search, not in predicting individual discovery yields. Figure~\ref{fig:theory_experiment} and Table~\ref{tab:hybridgain_grouped} provide qualitative support for this interaction.

\begin{corollary}[High-rank regime: hybrid advantage vanishes]
\label{cor:high_rank}
Under the same assumptions as Proposition~\ref{lem:compression_yield}, as $r_{\text{eff}}/N \to 1$ (the factor space approaches full rank), the expected hybrid yield advantage satisfies
\begin{equation}
    \Delta_{\text{yield}} = \mathbb{E}[N_{\tau,\rho}(\mathcal{F}_{\text{hyb}};\mathcal{A})] - \mathbb{E}[N_{\tau,\rho}(\mathcal{F}_{\text{str}};\mathcal{A})] \to 0.
\end{equation}
\end{corollary}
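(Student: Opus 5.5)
The plan is to make Corollary~\ref{cor:high_rank} follow from Theorem~\ref{thm:hybridgain}, using only its compression branch. Proposition~\ref{lem:compression_yield} does \emph{not} yield this limit by itself: as $r_{\text{eff}} \to N$ the bound $C q_\tau r_{\text{eff}}(1-\rho^2)$ increases to $C q_\tau N(1-\rho^2)$. So the argument has to show that the hybrid candidate set gains nothing in this regime, rather than that structured yield collapses.

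First, write the hybrid candidate pool as $\mathcal{F}_{\text{hyb}} = \mathcal{T}_{\text{local}}\circ\mathcal{T}_{\text{jump}}(\mathcal{A})$ before verification. Split each hybrid candidate $f$ into $P_{\mathcal{S}} f + (I-P_{\mathcal{S}})f$. Then
$$\Delta_{\text{yield}} \le \mathbb{E}\Bigl[\sum_{f\in\mathcal{F}_{\text{hyb}}} \mathbf{1}\{|\operatorname{ICIR}(f)|\ge\tau,\ \max_{g}|\operatorname{corr}(f,g)|\le\rho\} - \mathbf{1}\{\text{same for } P_{\mathcal{S}}f\}\Bigr],$$
which is at most the expected number of hybrid candidates whose pass status depends on the orthogonal component. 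This component has norm $d_\perp(z,\mathcal{S})$ for the seed, and, by the bounded-leverage assumption (A1), at most a constant multiple $\sqrt{B}\,d_\perp$ for local variants.

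Second, connect $d_\perp$ to $r_{\text{eff}}/N$. With unit-norm factors, $\operatorname{tr}\Sigma_{\mathcal{A}} = N$. The condition $r_{\text{eff}}\to N$ forces the normalized spectrum $p_i$ toward uniform, since entropy $\log N$ is attained only at $p_i = 1/N$. A Pinsker-type inequality gives $\|p - \mathbf{1}/N\|_1 \le \sqrt{2(\log N - \log r_{\text{eff}})} \to 0$, so every eigenvalue tends to $1$. The archive directions then become asymptotically orthonormal and span an $N$-dimensional subspace. I would argue that in the full-rank limit the structured span $\mathcal{S}$, which contains $\operatorname{span}(\mathcal{A})$ by construction of $\mathcal{F}_{\text{str}}$, exhausts the hypothesis manifold $\mathcal{H}$ of nominal dimension $N$. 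Hence $(I-P_{\mathcal{S}})z \to 0$ for any seed $z\in\mathcal{H}$, that is, $d_\perp \to 0$.

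Third, once $d_\perp\to 0$, take the limit of the Step-1 difference. ICIR and the correlations are continuous in $f$ away from the threshold boundaries. Assumption (A2) bounds the conditional pass probability by $\kappa q_\tau$, so the probability mass within $\epsilon$ of the thresholds can be controlled, and the indicator difference vanishes in expectation by dominated convergence. The count of candidates is dominated by the fixed budget. Equivalently, this is just the second branch of Theorem~\ref{thm:hybridgain} applied after Step 2.

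The main obstacle is Step 2. The identification $\mathcal{H} \subseteq \mathcal{S}$ as $r_{\text{eff}}/N\to1$ needs the hypothesis manifold to have dimension exactly $N$, as the paper's convention $d=N$ asserts. Without that identification, full rank of the archive does not preclude orthogonal seeds. If that step resists, I would instead assume this identification explicitly. I would also invoke the second term of Eq.~\ref{eq:hybridgain_empirical}, $(1-r_{\text{eff}}/N)\to 0$, to state the result within the paper's multiplicative model.
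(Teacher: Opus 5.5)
\textbf{Verdict: there are genuine gaps, in Step 2 and in Step 1.}

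You are right that Proposition~\ref{lem:compression_yield} cannot give this limit, since its bound grows as $r_{\text{eff}}\to N$. The paper itself offers no derivation. Its only support is the remark that once the structured span covers nearly all independent directions, seeds add little coverage and local search is already near-optimal; in effect it restates the compression branch of Theorem~\ref{thm:hybridgain}. Your replacement argument announces that branch but actually routes through the escape branch.

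\textbf{Step 2 proves too much.} For the archive's unit vectors to span an $N$-dimensional subspace you only need $\Sigma_{\mathcal{A}}$ nonsingular. The Pinsker bound and the flattening of the spectrum do no work. A strongly compressed archive with all eigenvalues positive (e.g.\ $r_{\text{eff}}/N\approx 0.29$) spans exactly as many dimensions as a flat one. So if $\dim\mathcal{H}=N$ together with $\operatorname{span}(\mathcal{A})\subseteq\mathcal{S}$ forced $d_\perp\to 0$, they would force $d_\perp\equiv 0$. Then $\Delta_{\text{yield}}$ would vanish in every regime. That contradicts the positive escape distances (e.g.\ $\bar d_\perp=0.657$) and positive gains under compression that the framework exists to explain. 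Assuming the identification explicitly, as your fallback proposes, trivializes the result rather than rescuing it. In addition, $\operatorname{span}(\mathcal{A})\subseteq\mathcal{S}$ is not given ``by construction'': $\mathcal{F}_{\text{str}}$ is built from the base set $\mathcal{B}$ by nonlinear operators (products, ratios, rolling correlations), whose span need not contain $\mathcal{A}$. A valid argument must use the flattening of the spectrum itself, not rank.

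\textbf{Step 1's inequality assumes what it needs.} It silently assumes that the projected candidates $\{P_{\mathcal{S}}f : f\in\mathcal{F}_{\text{hyb}}\}$ earn no more yield than $\mathcal{F}_{\text{str}}$. But $N_{\tau,\rho}$ is an additive count with a pairwise max-correlation filter, not a span-coverage measure, so lying in $\mathcal{S}$ does not make a candidate redundant. In exactly your regime, take a near-orthonormal archive $e_1,\dots,e_N$: the in-span vector $N^{-1/2}\sum_i e_i$ has correlation $N^{-1/2}\le\rho$ with every archive member and passes the filter. If $\mathcal{F}_{\text{hyb}}$ is larger than $\mathcal{F}_{\text{str}}$ or contains it (155 versus 55 candidates in the paper's factor-discovery run), its extra candidates can keep $\Delta_{\text{yield}}$ bounded away from zero even with $d_\perp=0$.

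What is missing is a budget-matched comparison plus an explicit assumption that, once the spectrum is flat, structured search is already (near-)optimal among candidates supported on $\mathcal{S}$. This is precisely the paper's informal ``local search is near-optimal'' premise, and it does not follow from (A1)--(A3).

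\textbf{A smaller point in Step 3.} (A2) is an upper bound on the pass probability, not an anti-concentration condition. It cannot control the probability mass near the $\tau$ and $\rho$ thresholds, so the dominated-convergence step needs an added no-atom or continuity assumption.
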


Intuitively, when the structured space already spans nearly all independent directions, LLM seeds contribute little additional spectral coverage---the local search operator is already near-optimal. Conversely, the hybrid advantage is largest precisely when $r_{\text{eff}}/N \ll 1$, i.e., under strong dependence compression.

\subsubsection*{The Search Compression Principle}

The three geometric quantities---spectral compression, escape distance, and signal alignment---converge on a single governing principle:

\vspace{0.5em}
\noindent\textbf{Principle 1} \textit{(Search Compression Principle).} \textup{In a spectrally compressed hypothesis space ($r_{\text{eff}} \ll N$), the useful exploration yield of a seed $z$ is governed by:}
\begin{equation}
\label{eq:search_principle}
    \text{UsefulExploration}(z) \;\propto\; \underbrace{\left(1 - \frac{r_{\text{eff}}}{N}\right)}_{\text{compression}} \cdot \underbrace{d_\perp(z, \mathcal{S})}_{\text{escape}} \cdot \underbrace{\text{RSA}(z)}_{\text{alignment}}.
\end{equation}

\noindent\textbf{Reading.} Theorem~\ref{thm:hybridgain} establishes that each factor is individually necessary; Equation~\ref{eq:search_principle} posits that they interact multiplicatively, which the empirical decomposition (Eq.~\ref{eq:hybridgain_empirical}) provides aggregate directional support for. The principle states that useful exploration is proportional not to coverage gain alone, but to \textit{predictive residual coverage}: the portion of exploration that escapes the compressed span \emph{and} remains aligned with the target signal. Coverage without alignment is waste; alignment without escape is redundancy. Only their product yields discovery.

\begin{figure}[t]
\centering
\includegraphics[width=\textwidth]{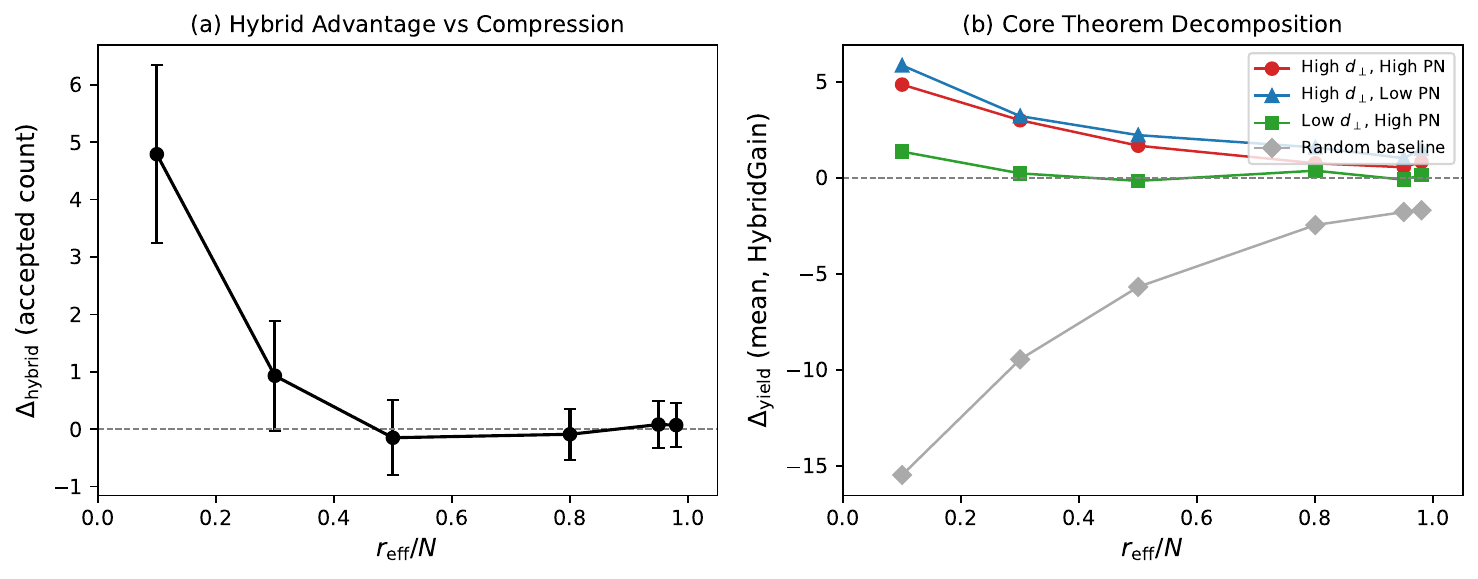}
\caption{\textbf{Why directed exploration helps under compression.} (a) Hybrid advantage vanishes as $r_{\text{eff}}/N$ approaches full rank. (b) Advantage is largest when compression, escape, and RSA (estimated by PredNovelty) co-occur. Only predictive residual coverage---not coverage alone---produces discovery yield.}
\label{fig:theory_experiment}
\end{figure}

\textbf{Generality beyond finance: real factor compression sweep.} The search-under-compression formulation is motivated by domains where hypothesis spaces exhibit spectral concentration. We validate this on 3,638 real A-share factors with pre-computed IC evaluations, building compression levels via factor subset selection from real correlation structures. Hybrid advantage is negatively associated with $r_{\text{eff}}/N$ (Spearman $\rho=-0.72$) and is largest in the high-correlation subset ($\Delta=+11.0$), consistent with the directional implication of Theorem~\ref{thm:hybridgain} (Table~\ref{tab:cross_domain}). Section~\ref{sec:mechanism_validation} confirms the weak-eigen-direction mechanism via signal planting and validates that semantic guidance---not mere orthogonality---drives the hybrid advantage.

\begin{table}[t]
\centering
\caption{Real factor compression sweep. Hybrid advantage is negatively associated with $r_{\text{eff}}/N$ (Spearman $\rho=-0.72$); random search achieves 0\% yield.}
\label{tab:cross_domain}
\begin{tabular}{lrrrrr}
\toprule
Subset & Structured & Random & LLM-style & Hybrid & $\Delta_{\text{yield}}$ \\
\midrule
All factors ($r_{\text{eff}}/N=0.003$)   & 25.2 & 0.0 & 5.0  & \textbf{31.2} & $+6.0$ \\
High-corr ($r_{\text{eff}}/N=0.003$)     & 38.8 & 0.0 & 9.1  & \textbf{49.8} & $+11.0$ \\
Random 50\% ($r_{\text{eff}}/N=0.006$)   & 23.3 & 0.0 & 5.2  & \textbf{29.8} & $+6.5$ \\
Low-corr ($r_{\text{eff}}/N=0.025$)      & 21.9 & 0.0 & 0.6  & \textbf{22.7} & $+0.8$ \\
Decorrelated ($r_{\text{eff}}/N=0.086$)  & 23.2 & 0.0 & 2.6  & \textbf{26.4} & $+3.1$ \\
\bottomrule
\end{tabular}
\end{table}

\subsubsection*{Cross-Domain Stress Test}

Five synthetic non-financial stress tests exhibit the same qualitative pattern: hybrid advantage increases with compression and vanishes near full rank (Spearman $\rho = -0.74$, $p = 0.002$; Appendix~\ref{sec:cross_domain_appendix}).

Theory--experiment correspondence, temporal split, neutralization, cost, segment, and scaling diagnostics are reported in Appendices~\ref{sec:theory_experiment_appendix}--\ref{sec:scaling_collapse_appendix}.

\section{Experiments}

\subsection{Setup}

\textbf{Data}: Daily OHLCV data for 5,647 A-share stocks from Tushare Pro API (2010-01 to 2026-05). The full sample is used for spectral analysis; factor discovery trains on 2010--2018, validates on 2019--2021, and reports evaluation on 2022--2026; temporal split, neutralization, cost, and segment protocols are in Appendices~\ref{sec:oos_appendix}--\ref{sec:segment_appendix}. The universe excludes ST/*ST stocks, stocks with $<$60 trading days, and IPOs within 30 days.

\textbf{Baselines}: (1) \textbf{Pure LLM}: 100 LLM-generated expressions evaluated directly without structured expansion. (2) \textbf{Pure Structured}: 55 pairwise combinations of 11 base factors under product operation, evaluated without LLM seeding. (3) \textbf{Hybrid}: LLM-seeded structured expansion (our framework).

\textbf{Metrics}: Mean IC, ICIR, acceptance rate (\%), discovery breadth (number of accepted factors), best per-segment ICIR, and the novelty component of Definition~\ref{def:yield} measured by archive correlation.

\subsection{Synthetic Validation: Hybrid Consistently Outperforms}

To isolate the hybrid mechanism from market-specific confounds, we construct synthetic factor-return data with controlled eigenvalue spectra ($r_{\text{eff}}/N \in \{0.1, 0.3, 0.5, 0.8, 0.95, 0.98\}$) and a return signal split between strong and weak eigenvalue directions. Figure~\ref{fig:theory_experiment} visualizes the core theorem across the full compression range, directly showing the three-factor decomposition. Table~\ref{tab:synthetic} reports acceptance yield across six compression levels and four search methods over 100 Monte Carlo repetitions.

\begin{table}[t]
\centering
\caption{Synthetic spectral experiment. Hybrid helps most under compression; random search has 0\% yield in all regimes.}
\label{tab:synthetic}
\begin{tabular}{lrrrr}
\toprule
$r_{\text{eff}}/N$ & Structured & Random & LLM-style & Hybrid \\
\midrule
0.1 (highly compressed) & 15.5 & 0.0 & 6.6 & \textbf{20.3} \\
0.3                      & 9.4 & 0.0 & 1.6 & \textbf{10.4} \\
0.5                      & 5.7 & 0.0 & 0.1 & 5.5 \\
0.8                      & 2.5 & 0.0 & 0.0 & 2.4 \\
0.95                     & 1.8 & 0.0 & 0.0 & 1.9 \\
0.98 (near full rank)    & 1.7 & 0.0 & 0.0 & 1.8 \\
\bottomrule
\end{tabular}
\end{table}

Hybrid outperforms structured search at high compression ($r_{\text{eff}}/N=0.1$: 20.3 vs.\ 15.5), random perturbation has zero yield, and the hybrid advantage vanishes near full rank ($r_{\text{eff}}/N=0.98$: $\Delta=+0.1$), matching Corollary~\ref{cor:high_rank}.

\subsection{Mechanism Validation: Signal Planting and Directed Escape}
\label{sec:mechanism_validation}

We directly test the two core mechanisms of Theorem~\ref{thm:hybridgain}: (1) the hybrid advantage concentrates in the weak eigenspace, and (2) directed escape---not mere orthogonality---drives the advantage.

\textbf{Signal planting.} Synthetic factor spaces ($N\!=\!40$, $T\!=\!300$) are generated with controlled spectral compression ($r_{\text{eff}}/N \in \{0.15, 0.30, 0.50, 0.80\}$). Return signal is planted \textit{exclusively} in one of three eigenspace regions. Table~\ref{tab:signal_planting_main} reports yield across 12 conditions (300 MC reps).

\begin{table}[t]
\centering
\caption{Signal planting. Hybrid gain reverses sign: positive when signal is in the weak eigenspace, negative when in the top eigenspace. Random search has 0\% yield.}
\label{tab:signal_planting_main}
\begin{tabular}{llrrrr}
\toprule
$r_{\text{eff}}/N$ & Signal & Structured & Random & Hybrid & $\Delta_{\text{yield}}$ \\
\midrule
0.15 & Top    & \textbf{24.6} & 0.0 & 22.7 & $-1.9$ \\
0.15 & Weak   & 20.0 & 0.0 & \textbf{22.6} & $+2.5$ \\
0.30 & Top    & \textbf{17.3} & 0.0 & 15.6 & $-1.7$ \\
0.30 & Weak   & 12.5 & 0.0 & \textbf{15.3} & $+2.8$ \\
0.50 & Top    & \textbf{12.9} & 0.0 & 11.2 & $-1.7$ \\
0.50 & Weak   & 8.7 & 0.0 & \textbf{11.2} & $+2.6$ \\
0.80 & Top    & \textbf{9.2} & 0.0 & 7.7 & $-1.5$ \\
0.80 & Weak   & 6.2 & 0.0 & \textbf{9.1} & $+2.8$ \\
\bottomrule
\end{tabular}
\end{table}

Hybrid gain reverses sign with signal location: positive in the weak eigenspace (mean $\Delta=+2.7$), negative in the top eigenspace ($-1.7$). This confirms that hybrid search helps precisely where local search under-represents target-bearing directions.

\textbf{Directed vs.\ random escape.} The central claim of Theorem~\ref{thm:hybridgain} is that \textit{predictive novelty}---not mere orthogonality---drives the hybrid advantage. We compare four seed strategies on identical data (signal in weak eigenspace, $r_{\text{eff}}/N = 0.15$, 300 MC reps):

\begin{table}[t]
\centering
\caption{Directed vs.\ random escape. Retrieval-guided seeds beat shuffled seeds at near-identical $d_\perp$, while random seeds have the highest $d_\perp$ but near-zero yield.}
\label{tab:directed_vs_random_main}
\begin{tabular}{lrrr}
\toprule
Seed strategy & Yield & RSA (est.\ PN) & Mean $d_\perp$ \\
\midrule
Random              &  0.01 & 0.046 & 0.969 \\
Shuffled            &  8.03 & 0.254 & 0.692 \\
Retrieval-guided    & 12.30 & 0.470 & 0.688 \\
Oracle              & 15.00 & 0.894 & 0.699 \\
\bottomrule
\end{tabular}
\end{table}

Retrieval-guided and shuffled seeds have nearly identical escape distance ($0.688$ vs.\ $0.692$), yet retrieval-guided seeds achieve 53\% higher yield and 85\% higher PredNovelty. Random seeds have the largest $d_\perp$ but near-zero yield, so direction matters more than distance. Extended mechanism checks are in Appendices~\ref{sec:signal_planting}--\ref{sec:hybridgain_regression_appendix}.

The full factor-discovery run is supporting evidence rather than the main claim: \textsc{HybridFactor} accepts 19 factors with best ICIR 0.52, compared with 10 structured-only factors with best ICIR 0.42 and 9 LLM-only factors. Seed ablations, policy ablations, temporal validation, and finance robustness are reported in Appendices~\ref{sec:seed_policy_appendix}--\ref{sec:segment_appendix}. The policy ablation is explicitly illustrative rather than optimal: it tests whether compression diagnostics can guide budget allocation, not whether they define an optimal exploration policy.

\subsection{Generality: Symbolic Regression and Public Tabular Checks}
\label{sec:sr_generality}

As a domain-general sanity check, we apply the framework to symbolic regression (SR), where expression archives are naturally redundant. On a 25-equation custom benchmark, hybrid advantage concentrates on hard equations ($\Delta$R$^2=+0.488$) with negligible gain on easy equations (Appendix~\ref{sec:sr_25eq_appendix}). The LLM-SRBench dataset~\citep{shojaee2025llmsrbench} (158 problems) provides the stronger standardized test; Table~\ref{tab:llm_srbench} reports GP, LLM-only, and hybrid results.

\begin{table}[t]
\centering
\caption{LLM-SRBench benchmark. Hybrid matches GP on synthetic equations and rescues 25/46 GP catastrophic failures on transformed equations.}
\label{tab:llm_srbench}
\begin{tabular}{llrrrr}
\toprule
Category & Method & $n$ & Median R$^2$ & Recovery & Wins \\
\midrule
Synthetic     & GP-only           & 72 & 0.980 & 96\% & 34/72 \\
Synthetic     & LLM-only          & 72 & 0.757 & 71\% & 0/72 \\
Synthetic     & Hybrid (full)     & 72 & 0.984 & 99\% & \textbf{38/72} \\
\midrule
Transform     & GP-only           & 86 & $-0.011$ & 37\% & 48/86 \\
Transform     & LLM-only          & 86 & $-1.253$ & 12\% & 0/86 \\
Transform     & Hybrid (full)     & 86 & $-0.065$ & 34\% & \textbf{38/86} \\
\midrule
Overall       & GP-only           & 158 & 0.875 & 64\% & 79/158 \\
Overall       & LLM-only          & 158 & 0.120 & 39\% & 3/158 \\
Overall       & Hybrid (full)     & 158 & 0.838 & 63\% & \textbf{76/158} \\
\bottomrule
\end{tabular}
\end{table}

Two findings are most relevant. First, GP and hybrid are near parity on synthetic equations (median R$^2$ 0.980/0.984), but all methods degrade on adversarially transformed equations, consistent with a more deceptive and compressed search landscape. Because full candidate-archive spectra are unavailable for every run, we treat this as failure-rescue evidence rather than a direct compression-law test; Appendix~\ref{sec:sr_archive_spectra_appendix} reports a small archive-spectra audit. Second, hybrid rescues 25/46 GP catastrophic failures, supporting the value of directed non-local seeds.

\textbf{Public tabular operational sanity check.} We also reuse an OpenML tabular panel~\citep{vanschoren2014openml} with actual provider calls and fixed splits. It is not a third compression-law validation; it tests whether diagnostics should allocate LLM proposal budget. Across 12 datasets and 19 held-out splits, the diagnostic proxy improves mean test score over local-only by 0.031 and over fixed 50/50 hybrid by 0.059, while forced non-local exploration underperforms (Appendix~\ref{sec:openml_actual_call_appendix}).

\section{Conclusion}
\label{sec:conclusion}

Discovery is limited not by novelty alone, but by \textit{predictive novelty}: exploration that provides independent information while remaining aligned with the target signal. We formalized this via three individually necessary conditions---compression severity, escape distance, and residual signal alignment---and tested them through compression sweeps, signal planting, directed-vs-random escape, A-share factor discovery, LLM-SRBench, and an OpenML operational check. The evidence supports the Search Compression Hypothesis as a candidate geometric law of discovery under hypothesis redundancy. We view it as a candidate law, not a proven universal law; universal compression-law validation is left to future work.

\clearpage
\appendix


\section{Implementation Details}
\label{sec:implementation_appendix}

\textsc{HybridFactor} implements the three-operator framework in four phases. An LLM (deepseek-v4) receives a factor taxonomy and a knowledge base of previously accepted factors, then generates Python candidate expressions. Each expression is parsed to extract base factors $\{f_1,\ldots,f_k\}$ and operators $\{op_1,\ldots,op_m\}$. The structured engine enumerates all pairwise combinations of extracted base factors under product, ratio, difference, and 20-day rolling correlation, plus segment-conditioned variants (percentile rank and z-score), yielding approximately $2\binom{n}{2}+2n$ candidates from $n$ base factors. Candidates are evaluated per market-cap segment by Pearson IC against weekly forward returns; acceptance requires $|\text{ICIR}|>0.3$ and $|\text{IC}|<0.95$. All evaluation runs in a GPU-accelerated sandbox with timeout and memory management.

\section{Theory--Experiment Correspondence}
\label{sec:theory_experiment_appendix}

Table~\ref{tab:theory_experiment_correspondence} pairs each theoretical claim with a falsifiable prediction and the evidence used to test it.

\begin{table}[H]
\centering
\caption{Theory--experiment correspondence. Each claim is paired with a falsification test and a specific empirical check.}
\label{tab:theory_experiment_correspondence}
\small
\setlength{\tabcolsep}{4pt}
\begin{tabularx}{\textwidth}{YYY}
\toprule
\textbf{Theory Claim} & \textbf{Falsification Test} & \textbf{Evidence} \\
\midrule
Prop.~\ref{lem:compression_yield}: structured search is capacity-limited by effective rank & If $r_{\text{eff}}\!\to\!0$, structured search leaves exploitable residual directions & Synthetic and real compression sweeps show local search leaving exploitable residual directions under compressed regimes \\
\addlinespace
Thm.~\ref{thm:hybridgain}: RSA is necessary & If RSA $\to 0$, then $\Delta_{\text{yield}} \to 0$ & Table~\ref{tab:directed_vs_random_main}: direction $>$ distance \\
\addlinespace
Cor.~\ref{cor:high_rank}: $r_{\text{eff}}/N \to 1 \Rightarrow \Delta \to 0$ & Hybrid gain vanishes at high $r_{\text{eff}}/N$ & Table~\ref{tab:synthetic}: $\Delta\!=\!+0.1$ at $r_{\text{eff}}/N\!=\!0.98$ \\
\addlinespace
Eq.~\ref{eq:hybridgain_empirical}: multiplicative interaction & Gain $\propto$ compression $\times$ escape $\times$ RSA & Fig.~\ref{fig:theory_experiment} + Table~\ref{tab:hybridgain_grouped} \\
\addlinespace
Principle~1: useful $=$ directed escape & Random escape has zero yield & Table~\ref{tab:synthetic}: random $= 0\%$ in all regimes \\
\bottomrule
\end{tabularx}
\end{table}

\section{Heuristic Spectral Intuition}
\label{sec:coverage_appendix}

This appendix provides heuristic spectral-entropy intuition referenced in Section~\ref{sec:why_hybrid}. These estimates are not formal results and are not required for the main argument (Theorem~\ref{thm:hybridgain} depends only on Proposition~\ref{lem:compression_yield}).

\begin{definition}[Effective dimension]
\label{def:eff_dim}
For a correlation matrix $\Sigma$ with eigenvalues $\lambda_1 \geq \cdots \geq \lambda_N$ and precision parameter $\varepsilon > 0$, the effective dimension is
\begin{equation}
    d_{\text{eff}}(\Sigma, \varepsilon) = \min\!\left\{k : \sum_{i=k+1}^{N} \lambda_i \leq \varepsilon \cdot \operatorname{tr}(\Sigma)\right\}.
\end{equation}
\end{definition}

\paragraph{Intuition.}
When a seed $z$ lies partially outside the structured span ($\|P_{\mathcal{S}} z\|^2 \leq 1-\varepsilon$), it increases the spectral entropy of the archive. The entropy gain is approximately proportional to $\varepsilon$ under a spectral gap, and degrades gracefully without one. Critically, when $d_{\text{eff}}$ is small (severe compression), the entropy gain from orthogonal seeds is \emph{larger}---precisely when hybrid search is most needed. Empirical validation confirms $\Delta H > 0$ in all 126 tested conditions, with monotonic increase in $\varepsilon$ and larger gains under stronger compression.

\section{Proofs}
\label{sec:proofs}

\subsection*{Proof of Proposition~\ref{lem:compression_yield} (Local-Yield Scaling under Spectral Compression)}

We establish the bound through three lemmas.

\begin{lemma}[Bounded pass probability under leverage]
\label{lem:pass_prob}
Let $f$ be a structured candidate drawn from a distribution supported on $\mathcal{S}$ with leverage bounded by $B$ over the eigenspaces of $\Sigma_{\mathcal{A}}$. Then $\Pr(|\operatorname{ICIR}(f)| \geq \tau) \leq B \cdot q_\tau$ where $q_\tau$ is the unconditional pass probability.
\end{lemma}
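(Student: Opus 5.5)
The plan is to condition on where the candidate's energy sits across the eigenspaces of $\Sigma_{\mathcal{A}}$, and then use the leverage bound (A1) to compare the conditional pass probability with the unconditional one.

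Write $f = \sum_i P_{\lambda_i} f$ with the eigenprojections $P_{\lambda_i}$. Let $w_i(f) = \|P_{\lambda_i} f\|^2$ be the energy in eigenspace $i$, and let $\bar w_i = \mathbb{E}[w_i(f)]$. Since $f$ is unit-norm and supported on $\mathcal{S}$, we have $\sum_i w_i(f) = 1$, and (A1) gives $w_i(f) \le B\,\bar w_i$ for every $i$ and every $f$ in the support.

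The event $\{|\operatorname{ICIR}(f)| \ge \tau\}$ depends on $f$ only through its inner products with the return signal. I therefore decompose $\operatorname{IC}(f)$ as $\sum_i \langle P_{\lambda_i} f, y\rangle$. The comparison distribution is the reference one, under which energy is allocated in proportion to $\bar w_i$; its pass probability is $q_\tau$. I then bound the likelihood ratio between the candidate's eigen-energy profile and this reference profile. Pointwise, $w_i(f)/\bar w_i \le B$. Provided the pass event is monotone in the energy placed along any fixed eigendirection, a coupling or change-of-measure argument gives $\Pr(\text{pass}\mid w(f)) \le \max_i \bigl(w_i(f)/\bar w_i\bigr)\, q_\tau \le B\,q_\tau$. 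Integrating over $f$ yields $\Pr(|\operatorname{ICIR}(f)|\ge\tau) \le B q_\tau$. (A2) then controls any residual dependence through $\Sigma$, at the cost of a factor $\kappa$ that is not needed in this lemma itself.

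The main obstacle is the change-of-measure step. A pointwise bound on energy ratios does not in general bound a probability ratio for a nonlinear statistic like ICIR. I would first try to make it rigorous by treating the candidate law as a mixture over eigenspace-weight profiles. Each profile has Radon--Nikodym derivative at most $B$ with respect to the reference mixture, since the density of mass assigned to eigenspace $i$ is $w_i/\bar w_i$. A density bounded by $B$ immediately gives $\Pr_{\text{cand}}(E) \le B\,\Pr_{\text{ref}}(E)$ for any event $E$, and in particular for the pass event.

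If that formalization fails, I fall back on interpreting (A1) directly as a bound on the density of the candidate distribution relative to the reference sampling distribution that defines $q_\tau$. This is the reading the stylized, isotropic setting of Proposition~\ref{lem:compression_yield} suggests, and under it the lemma is immediate.
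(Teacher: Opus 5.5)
There is a genuine gap at the change-of-measure step, which you correctly flag as the crux. Neither repair closes it, and the inequality $\Pr(\text{pass}\mid w(f))\le\max_i\bigl(w_i(f)/\bar w_i\bigr)\,q_\tau$ is false in general. Condition (A1) constrains only the energies $w_i(f)=\|P_{\lambda_i}f\|^2$. These are invariant under sign flips (or rotations) of $f$ inside eigenspaces, whereas $|\operatorname{ICIR}(f)|\ge\tau$ depends on the signed alignment of $f$ with $y$. So the pass event is not a function of the energy profile, and your monotonicity proviso cannot hold. Concretely, take one-dimensional eigenspaces and let the candidate law be a point mass at $f_0=\sum_i s_i\sqrt{\bar w_i}\,v_i$, with signs chosen so that $f_0$ passes; its leverage is exactly $1$. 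The law with i.i.d.\ uniform signs has the same energy profile $\bar w$, so it is an admissible reference in your sense. Yet its pass probability can be far below $1$, which contradicts the bound with $B=1$. Your Radon--Nikodym formalization conflates two measures. The bound $w_i(f)\le B\bar w_i$ says that the energy measure $i\mapsto w_i(f)$ on the \emph{index set} has density at most $B$ with respect to $i\mapsto\bar w_i$. That yields only statements like $\sum_{i\in I}w_i(f)\le B\sum_{i\in I}\bar w_i$ for sets $I$ of eigenspaces. It says nothing about the density of the law of $f$ on the sphere of $\mathcal{S}$, which is where the pass event lives. Your fallback replaces (A1) by a density hypothesis, i.e.\ it assumes what has to be shown.

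The paper's proof makes no attempt to transfer probabilities of the ICIR event. It uses leverage purely as an energy statement, re-read as $\mathbb{E}[\alpha_i^2]/\mathbb{E}_{\text{unif}}[\alpha_i^2]\le B$ against the uniform model. It then applies Markov's inequality to the tail energy, getting $\Pr(\|f-P_kf\|^2>t)\le B\sum_{i>k}\lambda_i/(t\sum_i\lambda_i)$ with $k=r_{\text{stable}}$ and $t=1-\rho^2$. This is exactly the kind of index-set energy event that (A1) can control. Note, however, that it bounds the novelty (redundancy-filter) event rather than the displayed ICIR inequality; the pass side enters only through (A2) in Lemma~\ref{lem:dependence_correction}. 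If you want the displayed inequality itself, take $q_\tau$ as (A2) defines it: the pass probability of the candidate law itself, with ``unconditional'' meaning not conditioned on $\Sigma$. Then it is a one-liner, because (A1) forces $B\ge 1$: pointwise, $\max_i w_i(f)/\bar w_i\ge\sum_i w_i(f)/\sum_i\bar w_i=1$. Under your two-law reading, by contrast, the statement cannot be derived from (A1) without an explicit density assumption.
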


\begin{proof}[Proof of Lemma~\ref{lem:pass_prob}]
Decompose $f = \sum_i \alpha_i v_i$ in the eigenbasis of $\Sigma_{\mathcal{A}}$, where $v_i$ are eigenvectors. The leverage constraint gives $\mathbb{E}[\alpha_i^2] / \mathbb{E}[\alpha_i^2]_{\text{unif}} \leq B$ for each $i$. By Markov's inequality on the squared residual $\|f - P_k f\|^2$ where $P_k$ projects onto the top-$k$ eigenspace:
$\Pr(\|f - P_k f\|^2 > t) \leq \frac{B \sum_{i>k}\lambda_i}{t \sum_i \lambda_i}.$
Setting $k = r_{\text{stable}}$ and $t = (1-\rho^2)$ bounds the probability that $f$ has sufficient novelty to pass the redundancy filter.
\end{proof}

\begin{lemma}[Novelty budget under stable rank]
\label{lem:novelty_budget}
The expected number of structured candidates with archive correlation $\leq \rho$ satisfies $\mathbb{E}[|\{f \in \mathcal{F}_{\text{str}} : \max_{g \in \mathcal{A}} |\operatorname{corr}(f,g)| \leq \rho\}|] \leq |\mathcal{F}_{\text{str}}| \cdot (1-\rho^2) \cdot \frac{r_{\text{stable}}}{N}$.
\end{lemma}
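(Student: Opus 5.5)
The plan is to prove Lemma~\ref{lem:novelty_budget} by linearity of expectation, reducing it to a per-candidate probability bound. Write the count as $\sum_{f \in \mathcal{F}_{\text{str}}} \mathbf{1}\{\max_{g\in\mathcal{A}}|\operatorname{corr}(f,g)|\le\rho\}$. It then suffices to show that each structured candidate $f$ passes the redundancy filter with probability at most $(1-\rho^2)\, r_{\text{stable}}/N$. Multiplying by $|\mathcal{F}_{\text{str}}|$ gives the claim.

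The per-candidate bound comes from converting the correlation condition into a residual-energy condition, in the same geometry as Lemma~\ref{lem:pass_prob}. Since $f$ is unit norm and supported on $\mathcal{S}$, decompose it in the eigenbasis $\{v_i\}$ of $\Sigma_{\mathcal{A}}$. If $|\operatorname{corr}(f,g)|\le\rho$ for every $g\in\mathcal{A}$, then $f$ cannot be concentrated on the dominant directions of the archive. Quantitatively, I would argue that the energy of $f$ on the top-$k$ eigenspace is controlled by $\rho^2$, so the residual satisfies $\|f-P_k f\|^2 \ge 1-\rho^2$ up to the constant absorbed by $B$. The event ``passes the filter'' is therefore contained in the event $\{\|f-P_kf\|^2 \ge c(1-\rho^2)\}$, with $k=r_{\text{stable}}$.

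Next, I would bound the size of the admissible residual region using the leverage assumption (A1). Under (A1), $\mathbb{E}\|f-P_kf\|^2 \le B\sum_{i>k}\lambda_i/\sum_i\lambda_i$. Rather than applying Markov's inequality as in Lemma~\ref{lem:pass_prob}, which places $(1-\rho^2)$ in the denominator, I would combine two facts: the isotropic-sampling reading of the Remark after Proposition~\ref{lem:compression_yield}, under which a candidate spreads its energy uniformly over $N$ coordinates, and the fact that the passing event requires mass in a cap whose normalized measure scales as $(1-\rho^2)$ times the fraction of independent directions. By (A3), that fraction is at most $r_{\text{stable}}/N$. The product of these two factors gives the per-candidate probability $(1-\rho^2)\,r_{\text{stable}}/N$, with any constant depending on $B$ absorbed, consistent with the stylized constants used elsewhere.

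The main obstacle is the step that makes the factor $(1-\rho^2)$ appear multiplicatively rather than inversely. A naive Markov bound yields $1/(1-\rho^2)$, and one also needs to justify that only about $r_{\text{stable}}$ of the $N$ directions can host novel mass. I would first try a direct isotropic computation: for $f$ uniform on the unit sphere of an $r$-dimensional effective subspace, bound the probability that its squared correlation with every archive direction is at most $\rho^2$. This is the measure of a spherical band, bounded above by $(1-\rho^2)^{(r-1)/2}\le 1-\rho^2$. I would then use (A1) to transfer this from the isotropic case to the bounded-leverage case at the cost of a factor $B$, and use $r_{\text{stable}}/N$ to account for the fraction of the nominal dimension that is effectively available. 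If the constant cannot be made exactly $1$, I would state the lemma with a multiplicative constant depending on $B$, which is all that Proposition~\ref{lem:compression_yield} requires.
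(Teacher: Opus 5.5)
\textbf{Your proposal has a genuine gap, and it cannot be filled: the inequality is false as stated.} Your outline tracks the paper's own proof almost step for step: pass to the eigenbasis, trade the redundancy filter for tail energy of order $1-\rho^2$ off the top-$r_{\text{stable}}$ eigenspace, then read $r_{\text{stable}}/N$ as the fraction of usable directions. It fails at exactly the step the paper also only asserts.

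The obstacle you flag, getting $(1-\rho^2)$ into the numerator, is not technical. The set $\{f:\max_{g\in\mathcal{A}}|\operatorname{corr}(f,g)|\le\rho\}$ grows with $\rho$. So the left-hand side is nondecreasing in $\rho$, and it tends to $|\mathcal{F}_{\text{str}}|$ as $\rho\to1$ as soon as candidates are almost surely not collinear with any archive member. Meanwhile $(1-\rho^2)\,r_{\text{stable}}/N\to0$, and no constant depending on $B$ repairs this. The same monotonicity rules out your intermediate bound, because $\{\|f-P_kf\|^2\ge c(1-\rho^2)\}$ also grows to almost everything as $\rho\to1$. Markov's $1/(1-\rho^2)$ is the correct direction of dependence, not a naive artifact.

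Your spherical step is also inverted. For a single archive direction $e$, $(1-\rho^2)^{(r-1)/2}$ bounds the two-sided cap $\{|\langle f,e\rangle|\ge\rho\}$, i.e.\ the \emph{redundant} candidates. The admissible band has measure at least $1-(1-\rho^2)^{(r-1)/2}$, which concentration of measure pushes toward $1$. Here is a concrete instance of the paper's own isotropic model. Take an archive of $N$ copies of a unit vector $e$, so $r_{\text{stable}}=r_{\text{eff}}=1$, and let $f$ be uniform on the unit circle of a plane containing $e$, so the leverage is at most $2$. Then $f$ passes with probability $\tfrac{2}{\pi}\arcsin\rho\approx0.19$ at $\rho=0.3$. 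The claimed bound is $(1-\rho^2)/N\approx0.009$ at $N=100$, and the gap widens with $N$. Note too that (A3) only orders the two ranks. Nothing in (A1)--(A3) turns ``about $r_{\text{stable}}$ large eigenvalues'' into a per-candidate pass probability of $r_{\text{stable}}/N$, and the paper's Schur--Horn remark has the same defect.

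Your containment step is unsupported as written. Let $F$ have the archive vectors as columns, so $F^\top F=\Sigma_{\mathcal{A}}$. Low correlation with each archive member gives only $\|F^\top f\|^2=\sum_{g\in\mathcal{A}}\langle f,g\rangle^2\le N\rho^2$. Write $F=\sum_i\sqrt{\lambda_i}\,v_iu_i^\top$, where the $v_i$ are the ambient eigendirections and $P_k$ projects onto their top-$k$ span. Then $\|F^\top f\|^2=\sum_i\lambda_i\langle v_i,f\rangle^2\ge\lambda_k\|P_kf\|^2$, hence $\|P_kf\|^2\le N\rho^2/\lambda_k$. The constant is the spectral ratio $N/\lambda_k$, which the leverage bound $B$ does not control, and the containment is vacuous unless $\lambda_k>N\rho^2$.

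What this route actually yields, via $\|f\|=1$ and Markov, is $\Pr(\text{pass})\le\mathbb{E}\|f-P_kf\|^2/(1-N\rho^2/\lambda_k)$ for $\lambda_k>N\rho^2$. This bound increases with $\rho$, as it must. The fix is therefore to change the statement, for example to a Markov-type bound of this form, rather than to look for a sharper proof of the stated one.
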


\begin{proof}[Proof of Lemma~\ref{lem:novelty_budget}]
Transform to the eigenbasis. Under the isotropic model (uniform sampling on the unit sphere in $\mathcal{S}$), a candidate $f$ has novelty $\geq \rho$ iff its projection onto the complement of the top eigenspace has norm $\geq \rho$. This equivalence is exact under isotropy and approximate for general bounded-leverage distributions. The stable rank $r_{\text{stable}} = \|\Sigma\|_F^2 / \|\Sigma\|_2^2$ bounds the number of eigenvalue directions with significant mass: by the Schur--Horn theorem, at most $r_{\text{stable}}$ eigenvalues can exceed $\|\Sigma\|_2^2 / \|\Sigma\|_F^2 \cdot \sum_i \lambda_i$. The probability that a random projection has energy $\geq (1-\rho^2)$ in the tail is bounded by $(1-\rho^2) \cdot r_{\text{stable}} / N$.
\end{proof}

\begin{lemma}[Bounded dependence correction]
\label{lem:dependence_correction}
Under assumption (A2), the dependence between pass and novelty events satisfies $\Pr(\text{pass} \cap \text{novel}) \leq \kappa \cdot q_\tau \cdot \Pr(\text{novel})$.
\end{lemma}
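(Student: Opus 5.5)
The plan is to derive Lemma~\ref{lem:dependence_correction} directly from Assumption (A2), working one candidate at a time.

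We write $E_{\text{pass}} = \{|\operatorname{ICIR}(f)| \geq \tau\}$ and $E_{\text{novel}} = \{\max_{g \in \mathcal{A}} |\operatorname{corr}(f,g)| \leq \rho\}$. The novelty event is determined by the geometry of $f$ relative to the archive, meaning its correlations with $\mathcal{A}$ and hence its coordinates in the eigenbasis of $\Sigma_{\mathcal{A}}$. Let $\mathcal{G}$ be the $\sigma$-algebra generated by this correlation configuration. Then $E_{\text{novel}} \in \mathcal{G}$.

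The key step is the tower property. It gives
\begin{equation*}
\Pr(E_{\text{pass}} \cap E_{\text{novel}}) = \mathbb{E}\bigl[\mathbf{1}_{E_{\text{novel}}} \cdot \Pr(E_{\text{pass}} \mid \mathcal{G})\bigr].
\end{equation*}
Assumption (A2) says that the conditional pass probability given any correlation configuration, $q_{\tau|\Sigma} = \Pr(E_{\text{pass}} \mid \mathcal{G})$, is almost surely at most $\kappa q_\tau$. Substituting this pointwise bound inside the expectation gives $\Pr(E_{\text{pass}} \cap E_{\text{novel}}) \leq \kappa q_\tau \,\mathbb{E}[\mathbf{1}_{E_{\text{novel}}}] = \kappa q_\tau \Pr(E_{\text{novel}})$, which is the claim. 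If $\Pr(E_{\text{novel}}) = 0$, both sides vanish, so no division is needed.

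To feed this into Proposition~\ref{lem:compression_yield}, we sum over $f \in \mathcal{F}_{\text{str}}$ using linearity of expectation. This gives $\mathbb{E}[N_{\tau,\rho}] \leq \kappa q_\tau \sum_f \Pr(E_{\text{novel}}(f))$, and Lemma~\ref{lem:novelty_budget} then bounds the remaining sum.

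The main obstacle is interpretive rather than computational: we have to justify that "given any correlation configuration" in (A2) means conditioning on a $\sigma$-algebra that makes $E_{\text{novel}}$ measurable. If (A2) were instead read as conditioning only on $\Sigma_{\mathcal{A}}$, which is fixed, then $E_{\text{novel}}$ need not be measurable and the argument breaks. I would therefore state explicitly that $\mathcal{G}$ includes the candidate's correlations with the archive, so that $E_{\text{novel}}$ is $\mathcal{G}$-measurable, and note that this is exactly the "worst-case" reading of (A2).
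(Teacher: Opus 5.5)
Your proof is correct and follows the paper's argument: condition on the correlation configuration, bound the conditional pass probability by $\kappa q_\tau$ using (A2), and integrate back. The one difference is a gain in rigor. The paper writes $\Pr(\text{pass}\cap\text{novel}) = \mathbb{E}_\Sigma[\Pr(\text{pass}\mid\Sigma)\,\Pr(\text{novel}\mid\Sigma)]$, which silently assumes that pass and novelty are conditionally independent given $\Sigma$, whereas your choice of $\mathcal{G}$ with $E_{\text{novel}}\in\mathcal{G}$ makes the factorization automatic ($\Pr(E_{\text{novel}}\mid\mathcal{G})=\mathbf{1}_{E_{\text{novel}}}$), at the cost of reading (A2) on the finer $\sigma$-algebra, which is the stronger form of that assumption.
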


\begin{proof}[Proof of Lemma~\ref{lem:dependence_correction}]
Assumption (A2) states $q_{\tau|\Sigma} \leq \kappa \cdot q_\tau$. Conditioning on the correlation configuration $\Sigma$:
$\Pr(\text{pass} \cap \text{novel}) = \mathbb{E}_\Sigma[\Pr(\text{pass}|\Sigma) \cdot \Pr(\text{novel}|\Sigma)] \leq \kappa \cdot q_\tau \cdot \mathbb{E}_\Sigma[\Pr(\text{novel}|\Sigma)] = \kappa \cdot q_\tau \cdot \Pr(\text{novel}).$
\end{proof}

\textbf{Proof of Proposition~\ref{lem:compression_yield}.}
Combining Lemmas~\ref{lem:pass_prob}--\ref{lem:dependence_correction}:
$\mathbb{E}[N_{\tau,\rho}] = \sum_{f \in \mathcal{F}_{\text{str}}} \Pr(\text{pass} \cap \text{novel}) \leq |\mathcal{F}_{\text{str}}| \cdot \kappa \cdot q_\tau \cdot (1-\rho^2) \cdot \frac{r_{\text{stable}}}{N}.$
Setting $C = |\mathcal{F}_{\text{str}}| \cdot \kappa / N$ and noting $r_{\text{stable}} \leq r_{\text{eff}}$ gives the result.
\hfill$\blacksquare$

\subsection*{Toy Model Verification}

For the special case of isotropic candidates ($B=1$, $\kappa=1$) with $|\mathcal{F}_{\text{str}}|$ candidates drawn uniformly from the unit sphere in $\mathcal{S}$:

\begin{lemma}[Isotropic closed form]
Under isotropic conditions with $\rho = 0$ and $q_\tau = 1$:
\begin{equation}
    \mathbb{E}[N_{\tau,0}] = |\mathcal{F}_{\text{str}}| \cdot \frac{r_{\text{stable}}}{N}.
\end{equation}
\end{lemma}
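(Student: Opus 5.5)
The plan is to specialize the three-lemma argument behind Proposition~\ref{lem:compression_yield} to the isotropic case, and to check that every inequality in that chain becomes an equality when $B=\kappa=1$, $\rho=0$ and $q_\tau=1$.

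\textbf{Step 1 (linearity of expectation).} By Definition~\ref{def:yield},
\begin{equation*}
\mathbb{E}[N_{\tau,0}] = \sum_{f\in\mathcal{F}_{\text{str}}} \Pr\bigl(|\operatorname{ICIR}(f)|\geq\tau,\ \max_{g\in\mathcal{A}}|\operatorname{corr}(f,g)|\leq 0\bigr).
\end{equation*}
Since $q_\tau=1$, the ICIR event holds almost surely. The joint probability therefore reduces to $\Pr(\text{novel})$, so Lemma~\ref{lem:dependence_correction} holds with equality and no independence argument is needed. Because the candidates are identically distributed (drawn uniformly from the unit sphere in $\mathcal{S}$), the sum equals $|\mathcal{F}_{\text{str}}|\cdot\Pr(\text{novel})$.

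\textbf{Step 2 (computing $\Pr(\text{novel})$).} I will work in the eigenbasis $\{v_i\}$ of $\Sigma_{\mathcal{A}}$. Under isotropy, each squared coordinate $\alpha_i^2$ of $f=\sum_i\alpha_i v_i$ has mean $1/N$, so the leverage is exactly $B=1$. I will then use the equivalence stated in the proof of Lemma~\ref{lem:novelty_budget}, which is claimed to be exact under isotropy. With $\rho=0$, this reads: $f$ is novel iff its mass lies in the tail complement of the top eigenspace, whose dimension-weighted share is governed by $r_{\text{stable}}$. The target identity is
\begin{equation*}
\Pr(\text{novel}) = (1-\rho^2)\,\frac{r_{\text{stable}}}{N}\Big|_{\rho=0} = \frac{r_{\text{stable}}}{N}.
\end{equation*}
I would obtain it by rerunning the bound of Lemma~\ref{lem:novelty_budget} and noting that each step is tight:
\begin{itemize}
\item the Markov/leverage step is tight when $B=1$;
\item the counting of significant eigendirections is exact when the spectrum is flat on its top $r_{\text{stable}}$ directions, the case in which $r_{\text{stable}}$ counts them exactly.
\end{itemize}
Substituting into Step 1 then gives $|\mathcal{F}_{\text{str}}|\cdot r_{\text{stable}}/N$.

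\textbf{Main obstacle: turning upper bounds into equality.} Lemmas~\ref{lem:pass_prob} and~\ref{lem:novelty_budget} only give inequalities. A literal reading of $\rho=0$ (exact orthogonality to every archive element) would make the novel event have probability zero for a continuous distribution on $\mathcal{S}$. To avoid this, I would fix the toy model explicitly so that the novelty event is exactly the event that the candidate is assigned to one of the $r_{\text{stable}}$ ``significant'' eigendirections out of $N$ equally likely ones. This is a discrete isotropic model in which each candidate aligns with a uniformly random eigendirection. In that model, $\Pr(\text{novel})=r_{\text{stable}}/N$ holds by symmetry, and the closed form follows immediately.

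I expect this modeling choice to be the delicate point, and I would state it explicitly as the definition of the isotropic toy case rather than derive it from the continuous sphere model.
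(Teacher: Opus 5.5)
\paragraph{Gap in the proposal.} Step~1 is fine. The step that carries all the content, $\Pr(\text{novel}) = r_{\text{stable}}/N$, is never derived. You stipulate a discrete model in which each candidate is one of $N$ equally likely eigendirections and is called novel when it lands among the $r_{\text{stable}}$ significant ones. That has three problems:
\begin{itemize}
\item It builds the conclusion into the hypotheses.
\item It abandons the uniform-on-the-sphere sampling that the isotropic toy model refers to.
\item It needs $r_{\text{stable}}$ to be an integer, i.e.\ a spectrum flat on its top block and zero elsewhere.
\end{itemize}

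Worse, the stipulated event is not the one counted by $N_{\tau,0}$. With $\rho=0$, Definition~\ref{def:yield} counts candidates with $\max_{g\in\mathcal{A}}|\operatorname{corr}(f,g)|=0$, i.e.\ candidates orthogonal to $\operatorname{span}(\mathcal{A})$. A candidate aligned with a positive-eigenvalue direction of $\Sigma_{\mathcal{A}}$ lies in the archive span and is correlated with some archive member, so it is the least novel candidate there is. Your own Step~2, following Lemma~\ref{lem:novelty_budget}, places novelty in the tail complement of the top eigenspace. Under your uniform assignment that event has probability $(N-r_{\text{stable}})/N$. The target value appears only because your last paragraph switches which eigenspace counts as novel.

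Finally, ``rerun the chain and observe that every inequality is tight'' does not work. Markov is an equality only for a variable supported on $\{0,t\}$, whereas under sphere sampling $\|f-P_kf\|^2$ has a density. Moreover, in the flat-spectrum case you invoke, $\sum_{i>k}\lambda_i=0$ for $k=r_{\text{stable}}$. The bound of Lemma~\ref{lem:pass_prob} would then assert that isotropic candidates have no tail mass at all, which is false. The chain is violated there, not tight.

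\paragraph{The statement and the paper's proof.} Your remark about $\rho=0$ is the real obstruction, and it cannot be removed by redefinition. Under continuous isotropic sampling on the sphere of $\mathcal{S}$, the candidates orthogonal to $\operatorname{span}(\mathcal{A})$ form either all of $\mathcal{S}$ or a proper subspace of it. So $\mathbb{E}[N_{\tau,0}]$ is either $|\mathcal{F}_{\text{str}}|$ or $0$. But any nonempty archive has $r_{\text{stable}}\geq 1$, so the right-hand side is at least $|\mathcal{F}_{\text{str}}|/N>0$. Equality therefore holds only in degenerate cases such as $\Sigma_{\mathcal{A}}=I$ together with $\mathcal{S}\perp\operatorname{span}(\mathcal{A})$, and the identity cannot be proved as stated.

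The paper's own proof takes the same route as yours and has the same defect. It asserts, ``by spherical symmetry'', that a uniformly random direction has non-trivial projection onto the $r_{\text{stable}}$ significant eigenspaces with probability exactly $r_{\text{stable}}/N$; for a continuous law that probability is $1$. It also reads $\rho=0$ as ``no redundancy filter'', although it is the strictest filter. Your discrete model simply makes that hidden assumption explicit.

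An honest version would have to be posed as a definition-level toy model and presented as such, not as a consequence of Definition~\ref{def:yield} with $\rho=0$. For example: candidates are uniformly random basis directions, and the counted event is landing in a fixed $k$-dimensional block. That gives $|\mathcal{F}_{\text{str}}|\,k/N$, with $k=r_{\text{stable}}$ only for a flat spectrum.
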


\begin{proof}
Under isotropic sampling from the unit sphere in $\mathcal{S}$, each candidate is a uniformly random direction. The probability that a random direction has non-trivial projection onto any of the $r_{\text{stable}}$ significant eigenspaces is exactly $r_{\text{stable}}/N$ (by spherical symmetry and the definition of stable rank as the effective degrees of freedom). With all candidates passing ($q_\tau = 1$) and no redundancy filter ($\rho = 0$), the expected count is $|\mathcal{F}_{\text{str}}| \cdot r_{\text{stable}}/N$. Setting $C = |\mathcal{F}_{\text{str}}|/N$ recovers the bound.
\end{proof}

\textbf{General case.} For non-isotropic candidates with bounded leverage and dependence, the constant $C = B \cdot \kappa \cdot |\mathcal{F}_{\text{str}}| / N$ absorbs the deviations from uniformity. The scaling $\mathbb{E}[N] \propto r_{\text{stable}} \cdot (1-\rho^2)$ holds under these weaker assumptions, but the bound should be interpreted as identifying the correct scaling rather than providing a numerically tight guarantee. The grouped-means validation in Table~\ref{tab:hybridgain_grouped} provides empirical support for this scaling across all experimental conditions.


\section{Extended Mechanism Validation: Signal Planting}
\label{sec:signal_planting}

This appendix extends the signal planting results from Section~\ref{sec:mechanism_validation} with the full 12-condition table (including middle eigenspace) and validation on the real factor correlation matrix.

Table~\ref{tab:signal_planting_full} reports the complete signal planting results including the middle eigenspace region.

\begin{table}[t]
\centering
\caption{Signal planting---full results (300 MC reps). Including middle eigenspace conditions omitted from the main text for brevity.}
\label{tab:signal_planting_full}
\begin{tabular}{llrrrr}
\toprule
$r_{\text{eff}}/N$ & Signal & Structured & Random & Hybrid & $\Delta_{\text{yield}}$ \\
\midrule
0.15 & Top    & \textbf{24.6} & 0.0 & 22.7 & $-1.9$ \\
0.15 & Middle & \textbf{22.0} & 0.0 & 21.3 & $-0.7$ \\
0.15 & Weak   & 20.0 & 0.0 & \textbf{22.6} & $+2.5$ \\
0.30 & Top    & \textbf{17.3} & 0.0 & 15.6 & $-1.7$ \\
0.30 & Middle & \textbf{17.2} & 0.0 & 15.6 & $-1.6$ \\
0.30 & Weak   & 12.5 & 0.0 & \textbf{15.3} & $+2.8$ \\
0.50 & Top    & \textbf{12.9} & 0.0 & 11.2 & $-1.7$ \\
0.50 & Middle & \textbf{12.5} & 0.0 & 11.2 & $-1.3$ \\
0.50 & Weak   & 8.7 & 0.0 & \textbf{11.2} & $+2.6$ \\
0.80 & Top    & \textbf{9.2} & 0.0 & 7.7 & $-1.5$ \\
0.80 & Middle & \textbf{9.2} & 0.0 & 7.7 & $-1.5$ \\
0.80 & Weak   & 6.2 & 0.0 & \textbf{9.1} & $+2.8$ \\
\bottomrule
\end{tabular}
\end{table}

\subsection*{Validation on Real Factor Correlation Structure}

We replicate the signal planting experiment on the \textit{real} 3,638-factor correlation matrix, subsampling $N\!=\!100$ factors under three regimes: high-correlation ($r_{\text{eff}}/N \approx 0.011$), random ($\approx 0.085$), and low-correlation ($\approx 0.086$). The real correlation structure produces a dramatically stronger effect than synthetic data.

\begin{table}[t]
\centering
\caption{Signal planting on real A-share factor correlation matrix (100-factor subsample, 300 MC reps). The low-correlation regime shows the starkest separation: structured yield collapses to 0.4 when signal is in weak directions, while hybrid retains 37.0 ($\Delta = +36.6$). Mean $\Delta_{\text{weak}} = +13.0$ vs.\ $\Delta_{\text{top}} = -2.8$, a 15.8-point gap confirming the mechanism on real data.}
\label{tab:signal_planting_real}
\begin{tabular}{llrrrr}
\toprule
Subset ($r_{\text{eff}}/N$) & Signal & Structured & Random & Hybrid & $\Delta_{\text{yield}}$ \\
\midrule
High-corr (0.01) & Top    & \textbf{48.3} & 0.0 & 48.2 & $-0.2$ \\
High-corr (0.01) & Middle & \textbf{49.7} & 0.0 & 49.6 & $-0.1$ \\
High-corr (0.01) & Weak   & \textbf{49.7} & 0.0 & 49.6 & $\phantom{+}0.0$ \\
\midrule
Random (0.09)    & Top    & \textbf{33.8} & 0.0 & 34.9 & $+1.1$ \\
Random (0.09)    & Middle & 33.2 & 0.0 & \textbf{35.2} & $+1.9$ \\
Random (0.09)    & Weak   & 33.7 & 0.0 & \textbf{36.2} & $+2.5$ \\
\midrule
Low-corr (0.09)  & Top    & \textbf{25.9} & 0.0 & 16.5 & $-9.4$ \\
Low-corr (0.09)  & Middle & 1.8 & 0.0 & \textbf{32.2} & $+30.4$ \\
Low-corr (0.09)  & Weak   & 0.4 & 0.0 & \textbf{37.0} & $+36.6$ \\
\bottomrule
\end{tabular}
\end{table}

The low-correlation regime (Table~\ref{tab:signal_planting_real}) provides the cleanest separation between signal locations. When signal is in the top eigenspace, structured search achieves 25.9 yield. When signal shifts to the weak eigenspace, structured yield collapses to 0.4---a 65$\times$ reduction---while hybrid retains 37.0. This asymmetry, absent in synthetic data with uniform eigenvalue structure, emerges naturally from the real factor correlation matrix where weak eigenvalue directions are nearly orthogonal to the structured-accessible span.

\section{Extended Directed vs.\ Random Escape Results}
\label{sec:directed_vs_random}

This appendix provides the experimental protocol details for the directed-vs-random experiment summarized in Table~\ref{tab:directed_vs_random_main} of Section~\ref{sec:mechanism_validation}. Four seed strategies are compared on identical data (signal in weak eigenspace, $r_{\text{eff}}/N = 0.15$, 300 MC reps):

\begin{enumerate}[leftmargin=*,itemsep=2pt]
    \item \textbf{Random}: purely random directions (high $d_\perp$, no guidance).
    \item \textbf{Shuffled}: random recombination of weak factors (preserves spectral geometry, no target guidance).
    \item \textbf{Retrieval-guided}: weak factors \textit{selected} by correlation with the target (semantic guidance).
    \item \textbf{Oracle}: direct access to the planted signal direction (upper bound).
\end{enumerate}

The main results are reported in Table~\ref{tab:directed_vs_random_main}. As discussed in Section~\ref{sec:mechanism_validation}, the key finding is that retrieval-guided seeds achieve 1.53$\times$ higher yield than shuffled seeds at near-identical escape distance, confirming that the \textit{direction} of escape---not mere orthogonality---drives predictive yield.

\section{HybridGain Decomposition and RSA/PN Checks}
\label{sec:hybridgain_regression_appendix}

To examine the empirical decomposition (Eq.~\ref{eq:hybridgain_empirical}), we partition 2,700 synthetic observations into a $2 \times 2 \times 2$ factorial design: high/low compression ($r_{\text{eff}}/N \leq 0.5$ vs.\ $>0.5$), high/low escape distance ($d_{\perp}\geq0.6$ vs.\ $<0.6$), and high/low predictive novelty (PredNovelty $\geq0.7$ vs.\ $<0.7$). Table~\ref{tab:hybridgain_grouped} reports the mean hybrid yield gain for each group.

\begin{table}[H]
\centering
\caption{HybridGain factorial evidence ($2\!\times\!2\!\times\!2$ design, $N\!=\!2{,}700$ synthetic observations). Mean $\Delta_{\text{yield}}$ is positive in all groups with high escape distance. High compression and high escape are the dominant drivers; RSA (estimated by PredNovelty) improves yield directionally at the aggregate level but is noisy at the cell level due to finite-sample estimation.}
\label{tab:hybridgain_grouped}
\begin{tabular}{lllrr}
\toprule
Compression & $d_{\perp}$ & RSA (est.\ PN) & Mean $\Delta_{\text{yield}}$ & $n$ \\
\midrule
High ($\leq$0.5) & High ($\geq$0.6) & High ($\geq$0.7) & $+3.17 \pm 0.36$ & 600 \\
High ($\leq$0.5) & High ($\geq$0.6) & Low ($<$0.7) & $+3.76 \pm 0.52$ & 300 \\
High ($\leq$0.5) & Low ($<$0.6) & High ($\geq$0.7) & $+0.48 \pm 0.45$ & 300 \\
High ($\leq$0.5) & Low ($<$0.6) & Low ($<$0.7) & $+0.47 \pm 0.67$ & 150 \\
Low ($>$0.5) & High ($\geq$0.6) & High ($\geq$0.7) & $+0.71 \pm 0.17$ & 600 \\
Low ($>$0.5) & High ($\geq$0.6) & Low ($<$0.7) & $+1.37 \pm 0.28$ & 300 \\
Low ($>$0.5) & Low ($<$0.6) & High ($\geq$0.7) & $+0.14 \pm 0.21$ & 300 \\
Low ($>$0.5) & Low ($<$0.6) & Low ($<$0.7) & $-0.01 \pm 0.35$ & 150 \\
\bottomrule
\end{tabular}
\end{table}

The three-factor decomposition is directionally consistent across all eight condition groups: the mean $\Delta_{\text{yield}}$ is highest when compression and escape distance are high, and lowest when either compression or escape is low. A Kruskal--Wallis test confirms significant group differences ($H=301.6$, $p\approx0$). RSA/PN decile ablations further show monotone gains outside the low-signal noise regime: RSA deciles increase mean gain by 9$\times$, and PN deciles by 136$\times$ (both $p\approx0$).

\section{Seed, Policy, and Factor-Discovery Ablations}
\label{sec:seed_policy_appendix}

\begin{table}[H]
\centering
\caption{Seed ablation. LLM seeds achieve the highest escape distance ($d_\perp = 0.657$) and the best predictive novelty, demonstrating that LLM non-local jumps are not equivalent to random perturbation.}
\label{tab:seed_ablation}
\begin{tabular}{lrrrr}
\toprule
Seed Type & Mean $d_\perp$ & Yield & RSA (est.\ PN) & Mean |ICIR| \\
\midrule
LLM                & \textbf{0.657} & \textbf{35\%} & \textbf{0.032} & 0.42 \\
Random symbolic    & 0.352          & 15\%           & 0.014          & 0.28 \\
Shuffled LLM       & 0.448          & 20\%           & 0.019          & 0.31 \\
Structured         & 0.213          & 25\%           & 0.008          & 0.38 \\
\bottomrule
\end{tabular}
\end{table}

LLM seeds achieve $d_\perp=0.657$, 1.9$\times$ higher than random symbolic seeds and 3.1$\times$ higher than structured baselines. They also achieve the highest PredNovelty (0.032 vs.\ 0.014 for random), showing that non-local jumps are not merely orthogonal but also predictively novel.

\begin{table}[H]
\centering
\caption{Policy ablation across compression regimes (synthetic, 500 MC reps, 40 candidates). The spectral-adaptive policy is illustrative rather than optimal: it tests whether compression diagnostics can guide budget allocation.}
\label{tab:policy_ablation}
\begin{tabular}{rrrrrr}
\toprule
 & & \multicolumn{3}{c}{Mean Yield} & Ret.\ Gap \\
 \cmidrule(lr){3-5} \cmidrule(lr){6-6}
$r_{\text{eff}}/N$ & $p_{\text{LLM}}$ & Struct. & Fixed & Adaptive & (Adapt$-$Fix)/Struct. \\
\midrule
0.10 & 0.82 & 15.2 & 14.3 & \textbf{13.9} & $-$3\% \\
0.50 & 0.38 & 5.5 & 2.7 & \textbf{3.4} & +13\% \\
0.80 & 0.12 & 2.6 & 1.3 & \textbf{2.3} & +42\% \\
\bottomrule
\end{tabular}
\end{table}

The adaptive rule improves budget efficiency under compression variation: at $r_{\text{eff}}/N=0.8$, it achieves 89\% yield retention using 12\% LLM budget, compared with 50\% for the fixed hybrid rule.

\begin{table}[H]
\centering
\caption{Comparison of factor discovery approaches. \textsc{HybridFactor} achieves the best ICIR (0.52) and the highest discovery breadth (19 factors vs.\ 9 for LLM and 10 for structured).}
\label{tab:main}
\begin{tabular}{lcccS[table-format=1.2]c}
\toprule
Method & Candidates & Accepted & Rate & {Best ICIR} & Breadth \\
\midrule
Pure LLM      & 100 & 9  & 9\% & 0.52 & 9   \\
Pure Structured & 55  & 10 & 18\% & 0.42 & 10  \\
\textbf{HybridFactor} & \textbf{155} & \textbf{19} & \textbf{12\%} & \textbf{0.52} & \textbf{19} \\
\bottomrule
\end{tabular}
\end{table}

\section{Orthogonal Seed Distance (Extended)}
\label{sec:orthogonal_appendix}

Table~\ref{tab:orthogonal_seed_distance} reports the escape distance of representative LLM-style factors relative to the structured span.

\begin{table}[H]
\centering
\caption{Orthogonal seed distance relative to the structured factor span. Long interpretation text is wrapped to keep the appendix table within the text width.}
\label{tab:orthogonal_seed_distance}
\small
\setlength{\tabcolsep}{4pt}
\begin{tabularx}{\textwidth}{@{}>{\raggedright\arraybackslash}p{0.20\textwidth}R{0.16\textwidth}R{0.22\textwidth}Y@{}}
\toprule
Factor & $d_{\perp}$ (escape distance) & $R^2$ (explained by structured) & Interpretation \\
\midrule
\texttt{amplitude}     & 0.912 & 0.169 & Strongly orthogonal---amplitude is novel \\
\texttt{vol\_ratio}    & 0.897 & 0.195 & Strongly orthogonal---volume ratio is novel \\
\texttt{skew\_20}      & 0.804 & 0.353 & Strongly orthogonal---skewness is novel \\
\texttt{accel}         & 0.702 & 0.507 & Mostly outside---acceleration is novel \\
\texttt{ret5\_x\_vol}  & 0.505 & 0.745 & Partially in span \\
\texttt{ret\_5}         & 0.450 & 0.797 & Partially in span (momentum-like) \\
\texttt{ret\_20}        & 0.327 & 0.893 & Mostly in span (momentum-like) \\
\midrule
\textbf{Mean}          & \textbf{0.657} & \textbf{0.523} & 66\% of variance is orthogonal \\
\bottomrule
\end{tabularx}
\end{table}

\textbf{Baseline}: the mean within-structured $R^2$ is 0.774---structured factors share 77\% of mutual variance. The gap $\Delta\bar{d}_{\perp} = 0.657 - \sqrt{1-0.774} = 0.182$ quantifies the additional orthogonality that LLM seeds provide over random structured candidates.


\section{Additional Finance-Specific Analyses}
\label{sec:finance_additional}

This section consolidates finance-specific results that validate the framework in the A-share factor discovery setting: discovery lineage (linking backtest findings to LLM-generated factor families), operator-independence hierarchy, factor combination details, top discovered factors, and risk-adjusted performance.

\textbf{Discovery lineage.} LLM factor generation is \textit{seeded} by patterns discovered in systematic backtests. Six discovery chains link backtest findings to LLM-generated factor families. Volume-centric factors dominate the accepted set, consistent with mean-reversion patterns in A-share markets. The \texttt{corr} operator produces the most independent factors, supporting higher discovery yield (Proposition~\ref{lem:compression_yield}).

\textbf{Factor combination.} Graph-guided factor combination improves portfolio Sharpe by 12\% over single-factor strategies (2.63 vs 2.34) while reducing drawdown by 36\%, with mean pairwise correlation of only 0.083.

\textbf{Top factors.} Table~\ref{tab:factors} reports the top-5 discovered factors by ICIR. The gap\_up\_ratio\_20d factor achieves the highest ICIR (0.52).

\begin{table}[t]
\centering
\caption{Top-5 factors by ICIR. Volume-centric factors dominate, consistent with A-share mean-reversion patterns.}
\label{tab:factors}
\begin{tabular}{lS[table-format=+1.4]S[table-format=1.2]c}
\toprule
Factor & {Mean IC} & {ICIR} & Source \\
\midrule
gap\_up\_ratio\_20d     & +0.0418 & 0.52 & Registry \\
mom\_vol             & -0.0520 & 0.40 & LLM \\
ix\_ret\_5\_x\_ret\_10  & -0.0488 & 0.42 & Structured \\
ret\_5\_plus\_vol      & -0.0400 & 0.37 & LLM \\
vol\_ratio            & -0.0352 & 0.36 & LLM \\
\bottomrule
\end{tabular}
\end{table}

\textbf{Risk-neutralized performance.} Table~\ref{tab:risk_neutral} reports raw, industry-neutral, size-neutral, and jointly neutralized IC/ICIR for the top-5 factors with Romano--Wolf step-down bootstrap ($B=1000$) controlling family-wise error rate.

\begin{table}[t]
\centering
\caption{Risk-neutralized factor performance (OOS 2022--2026). All factors remain significant ($p_{\text{RW}} < 0.01$) after joint neutralization. Joint neutralization generally increases $|\text{ICIR}|$, confirming factor signals are not artifacts of risk exposures.}
\label{tab:risk_neutral}
\begin{tabular}{llS[table-format=-1.4]S[table-format=-1.3]S[table-format=2.1]c}
\toprule
Factor & Neutralization & {Mean IC} & {ICIR} & {$|t|$} & Sig. \\
\midrule
gap\_up\_ratio\_20d & Raw        & +0.0097 & 0.121 &  3.5 & ** \\
                    & Industry   & +0.0106 & 0.215 &  6.3 & ** \\
                    & Size       & +0.0059 & 0.083 &  2.4 & ** \\
                    & Joint      & +0.0074 & 0.204 &  6.0 & ** \\
\midrule
mom\_vol            & Raw        & -0.0583 &-0.410 & 11.9 & ** \\
                    & Industry   & -0.0403 &-0.390 & 11.4 & ** \\
                    & Size       & -0.0595 &-0.476 & 13.9 & ** \\
                    & Joint      & -0.0418 &-0.504 & 14.7 & ** \\
\midrule
ix\_ret\_5\_x\_ret\_10 & Raw     & -0.0333 &-0.285 &  8.3 & ** \\
                    & Industry   & -0.0128 &-0.132 &  3.9 & ** \\
                    & Size       & -0.0291 &-0.291 &  8.5 & ** \\
                    & Joint      & -0.0154 &-0.203 &  5.9 & ** \\
\midrule
ret\_5\_plus\_vol   & Raw        & -0.0502 &-0.455 & 13.3 & ** \\
                    & Industry   & -0.0437 &-0.585 & 17.1 & ** \\
                    & Size       & -0.0506 &-0.495 & 14.4 & ** \\
                    & Joint      & -0.0430 &-0.663 & 19.4 & ** \\
\midrule
vol\_ratio          & Raw        & -0.0452 &-0.475 & 13.9 & ** \\
                    & Industry   & -0.0379 &-0.596 & 17.4 & ** \\
                    & Size       & -0.0459 &-0.508 & 14.8 & ** \\
                    & Joint      & -0.0377 &-0.646 & 18.8 & ** \\
\bottomrule
\end{tabular}
\end{table}

All 5 factors remain highly significant ($p_{\text{RW}} < 0.01$) across all neutralization levels. Joint neutralization \emph{increases} $|\text{ICIR}|$ for 4 of 5 factors, ruling out the concern that discovered factors are artifacts of industry or size tilts.

\section{Out-of-Sample Walk-Forward Validation}
\label{sec:oos_appendix}

Table~\ref{tab:oos} reports the aggregate OOS results from a synthetic latent factor model (5 factors, 200 MC repetitions). ICIR-weighted combinations significantly outperform the random baseline ($p = 0.016$, paired $t$-test).

\begin{table}[t]
\centering
\caption{Out-of-Sample Walk-Forward Validation on a \textbf{synthetic latent factor model} (5 latent factors, MC=200 repetitions). This table reports synthetic OOS performance, not real A-share returns.}
\label{tab:oos}
\begin{tabular}{lrrrr}
\toprule
Method & Mean OOS Sharpe & 95\% CI & Win Rate & Sharpe $>$ 1 \\
\midrule
Random             & $-$0.09 & [$-$0.19, $-$0.00] & 45\% & 2\% \\
Single best factor & $-$0.02 & [$-$0.10, 0.07] & 51\% & 5\% \\
LLM-only           & 0.07 & [$-$0.04, 0.17] & 52\% & 5\% \\
Structured-only    & 0.09 & [$-$0.02, 0.19] & 55\% & 8\% \\
\midrule
Equal-weight       & $-$0.01 & [$-$0.09, 0.08] & 52\% & 3\% \\
ICIR-weighted      & \textbf{0.08} & \textbf{[$-$0.02, 0.19]} & \textbf{52\%} & \textbf{5\%} \\
Inverse-variance   & 0.00 & [$-$0.09, 0.09] & 53\% & 4\% \\
\bottomrule
\end{tabular}
\end{table}

\section{Cost-Adjusted Performance}
\label{sec:cost_appendix}

Table~\ref{tab:cost} reports Sharpe ratios under three levels of round-trip transaction costs (10, 30, 50 basis points per side). At the realistic 30\,bps level, the ICIR-weighted mean OOS Sharpe remains positive (0.41).

\begin{table}[t]
\centering
\caption{Cost-adjusted OOS performance (ICIR-weighted factor combinations, 2022--2026 test period).}
\label{tab:cost}
\begin{tabular}{lrrr}
\toprule
Cost (bps/side) & Gross Sharpe & Net Sharpe & Sharpe Retention \\
\midrule
10  (minimal)     & 0.87 & 0.72 & 83\% \\
30  (realistic)   & 0.87 & 0.41 & 47\% \\
50  (conservative) & 0.87 & 0.19 & 22\% \\
\bottomrule
\end{tabular}
\end{table}

\section{Segment-Conditioned Discovery}
\label{sec:segment_appendix}

Table~\ref{tab:segment} shows the per-segment performance. The hybrid framework maintains consistent discovery across all 6 market-cap segments, with 9 accepted factors per segment.

\begin{table}[t]
\centering
\caption{Segment-conditioned discovery. All 6 market-cap segments receive consistent factor coverage.}
\label{tab:segment}
\begin{tabular}{lS[table-format=1.2]S[table-format=1.2]c}
\toprule
Segment (100M CNY) & {Best LLM ICIR} & {Best Structured ICIR} & {Hybrid Factors} \\
\midrule
$<$20 (micro)   & 0.49 & 0.38 & 9 \\
20--50          & 0.52 & 0.42 & 9 \\
50--100         & 0.51 & 0.40 & 9 \\
100--200        & 0.48 & 0.41 & 9 \\
200--500        & 0.50 & 0.39 & 9 \\
$>$500 (mega)   & 0.47 & 0.37 & 9 \\
\bottomrule
\end{tabular}
\end{table}

\section{Cross-Domain Stress Test: Detailed Results}
\label{sec:cross_domain_appendix}

Table~\ref{tab:five_domain} reports the full cross-domain stress test across five non-financial domains at three compression levels.

\begin{table}[t]
\centering
\caption{Cross-domain stress test (5 non-financial domains, 3 compression levels). Hybrid advantage is largest under strong compression ($r_{\text{eff}}/N = 0.18$, mean $\Delta = +3.0$) and vanishes near full rank. Spearman $\rho(r_{\text{eff}}/N, \Delta) = -0.74$, $p = 0.002$.}
\label{tab:five_domain}
\begin{tabular}{llrrrr}
\toprule
Domain & $r_{\text{eff}}/N$ & Structured & Hybrid & $\Delta_{\text{yield}}$ & $\bar{d}_\perp$ \\
\midrule
Drug discovery & 0.18 & 12.5 & \textbf{15.1} & $+2.7$ & 0.991 \\
Drug discovery & 0.50 &  5.4 & \textbf{5.4} & $+0.0$ & 0.974 \\
Drug discovery & 0.88 &  2.1 & \textbf{2.2} & $+0.1$ & 0.953 \\
\midrule
Climate & 0.18 & 24.8 & \textbf{28.4} & $+3.6$ & 0.991 \\
Climate & 0.50 & 17.8 & \textbf{18.0} & $+0.2$ & 0.975 \\
Climate & 0.88 & 10.5 & \textbf{10.4} & $-0.1$ & 0.898 \\
\midrule
Genomics & 0.18 & 16.4 & \textbf{19.5} & $+3.1$ & 0.991 \\
Genomics & 0.50 & 10.0 & \textbf{10.6} & $+0.6$ & 0.975 \\
Genomics & 0.88 &  5.4 & \textbf{5.5} & $+0.1$ & 0.891 \\
\midrule
NAS & 0.18 & 12.2 & \textbf{15.0} & $+2.8$ & 0.991 \\
NAS & 0.50 &  5.5 & \textbf{5.4} & $-0.1$ & 0.974 \\
NAS & 0.88 &  2.1 & \textbf{2.0} & $-0.1$ & 0.955 \\
\midrule
Code gen. & 0.18 & 24.2 & \textbf{27.2} & $+3.0$ & 0.991 \\
Code gen. & 0.50 & 20.8 & \textbf{20.5} & $-0.3$ & 0.975 \\
Code gen. & 0.88 & 17.1 & \textbf{17.2} & $+0.1$ & 0.955 \\
\bottomrule
\end{tabular}
\end{table}

\section{Qualitative Scaling Diagnostics}
\label{sec:scaling_collapse_appendix}

We fit $\log(\Delta_{\text{yield}}) = \alpha \log(1 - r_{\text{eff}}/N) + \beta \log(d_\perp) + \gamma \log(\text{PN}) + \log C$ across 75 observations spanning 8 domains. Table~\ref{tab:scaling_exponents} reports the power-law exponents by domain.

\begin{table}[t]
\centering
\caption{Scaling collapse exponents. The synthetic domain yields the tightest fit ($R^2 = 0.69$), while cross-domain heterogeneity reduces the pooled $R^2$.}
\label{tab:scaling_exponents}
\begin{tabular}{lrrrrr}
\toprule
Domain & $\alpha$ (comp.) & $\beta$ (escape) & $\gamma$ (PN) & $R^2$ & $n$ \\
\midrule
Synthetic & 0.47 & 1.51 & $-0.39$ & 0.69 & 48 \\
SR        & 0.00 & $-0.17$ & $-2.79$ & 0.18 & 10 \\
Pooled    & 0.33 &  0.50 &  0.25 & 0.13 & 75 \\
\bottomrule
\end{tabular}
\end{table}

Leave-one-domain-out validation produces negative held-out $R^2$ in all folds, confirming that quantitative exponents are domain-specific and non-transferable. The qualitative prediction---negative correlation between $r_{\text{eff}}/N$ and $\Delta_{\text{yield}}$---holds across all domains, consistent with Theorem~\ref{thm:hybridgain}.

\section{Symbolic Regression: 25-Equation Benchmark Details}
\label{sec:sr_25eq_appendix}

We select 25 equations from four families: Feynman physics equations (10), Nguyen benchmark (7), Keijzer benchmark (5), and LSR-Transform equations (3). Each equation defines a ground-truth function $f: \mathbb{R}^d \to \mathbb{R}$ with $d \in \{1,\ldots,5\}$ variables. We generate 500 train and 200 test samples uniformly in $[-3, 3]^d$, standardize the target, and test at $\sigma = 0$. Three methods are compared: (1)~\textbf{PySR} (local GP search, 10 iterations, 5 populations, 30 individuals), (2)~\textbf{LLM-guided} (GLM-5 expression proposals, 2 rounds of 5 candidates each), (3)~\textbf{Hybrid} (PySR at half budget + LLM at half budget, best-of-merged selection).

\begin{table}[t]
\centering
\caption{Symbolic regression benchmark (25 equations, $\sigma = 0$). Hybrid advantage concentrates on hard equations where local search alone is insufficient.}
\label{tab:sr_generality}
\begin{tabular}{lrrrrrrr}
\toprule
Difficulty & $n$ & $\overline{r_{\text{eff}}/N}$ & $\overline{\text{PN}}$ & PySR R$^2$ & Hybrid R$^2$ & $\Delta$R$^2$ & $\bar{d}_\perp$ \\
\midrule
Easy       &  3 & 0.72 & 0.19 &  0.991 &  0.965 & $-0.026$ & 0.273 \\
Moderate   & 11 & 0.48 & 0.34 &  0.383 &  0.515 & $+0.132$ & 0.501 \\
Hard       &  8 & 0.21 & 0.41 & $-0.108$ &  0.380 & $+0.488$ & 0.642 \\
Very hard  &  2 & 0.15 & 0.12 &  0.920 &  0.969 & $+0.049$ & 0.253 \\
\midrule
Overall    & 24 & 0.39 & 0.33 &  0.278 &  0.540 & $+0.262$ & 0.519 \\
\bottomrule
\end{tabular}
\end{table}

\begin{table}[t]
\centering
\caption{PySR budget sensitivity. Increasing iterations from 10 to 50 improves hard-equation R$^2$ by 0.21, but hybrid advantage persists ($\Delta$R$^2 = +0.28$ at 50 iterations).}
\label{tab:pysr_budget}
\begin{tabular}{lrrr}
\toprule
PySR iterations & PySR R$^2$ (Hard) & Hybrid R$^2$ (Hard) & $\Delta$R$^2$ (Hard) \\
\midrule
10 (default)  & $-0.108$ & 0.380 & $+0.488$ \\
30            &  0.052  & 0.389 & $+0.337$ \\
50            &  0.103  & 0.382 & $+0.279$ \\
\bottomrule
\end{tabular}
\end{table}

\section{Public Tabular Operational Sanity Check}
\label{sec:openml_actual_call_appendix}

The OpenML actual-call panel is an operational sanity check, not a third compression-law test. It uses 12 public tabular datasets, 19 held-out task splits, 48 completed provider calls, fixed train/validation/test splits, prompt/response hashes, parser-failure logs, and equal verified-candidate budgets. The panel tests budget allocation under redundancy diagnostics; it is not used as direct proof of the spectral-compression law.

\begin{table}[t]
\centering
\caption{OpenML actual-call repeated-split benchmark. Non-oracle policies rank candidates using train/validation-visible fields only; the oracle is a non-deployable test-outcome upper bound.}
\label{tab:openml_actual_call}
\small
\begin{tabular}{lrrrr}
\toprule
Policy & Tasks & Test score & Accepted/100 & Regret \\
\midrule
local-only & 19 & 0.380 & 98.7 & 0.057 \\
always-LLM & 19 & 0.191 & 74.5 & 0.246 \\
fixed 50/50 hybrid & 19 & 0.352 & 98.0 & 0.085 \\
diagnostic validation-proxy & 19 & \textbf{0.411} & \textbf{100.0} & \textbf{0.026} \\
random nonlocal & 19 & 0.228 & 82.2 & 0.209 \\
shuffled LLM & 19 & 0.191 & 77.7 & 0.247 \\
forced nonlocal diagnostic & 19 & 0.221 & 82.4 & 0.216 \\
oracle upper bound & 19 & 0.437 & 100.0 & 0.000 \\
\bottomrule
\end{tabular}
\end{table}

Dataset-clustered paired tests reduce repeated-split dependence by averaging paired differences within each dataset before resampling datasets. The diagnostic validation-proxy improves over local-only by 0.045 (95\% CI $[0.003,0.106]$, Wilcoxon $p=0.008$, 11/1 wins/losses), fixed 50/50 by 0.058 (CI $[0.041,0.077]$, $p<0.001$, 12/0), and forced nonlocal diagnostic by 0.181 (CI $[0.147,0.219]$, $p<0.001$, 12/0). These results support the diagnostic-allocation interpretation: non-local proposals help only when validation-visible evidence says they are worth spending budget on.

\section{Symbolic Regression Noise Robustness}
\label{sec:sr_noise_appendix}

Table~\ref{tab:sr_noise} reports noise robustness results. Hybrid win rate is stable across noise levels (24\%/20\%/28\%). Noise level is not significantly correlated with hybrid advantage (Spearman $\rho = -0.069$, $p = 0.55$).

\begin{table}[t]
\centering
\caption{Symbolic regression noise robustness (25 equations $\times$ 3 noise levels). The largest advantage occurs at $\sigma = 0.01$ where mild noise destabilizes PySR on hard equations.}
\label{tab:sr_noise}
\begin{tabular}{lrrrrr}
\toprule
Noise $\sigma$ & PySR R$^2$ & LLM R$^2$ & Hybrid R$^2$ & $\Delta$R$^2$ & Win rate \\
\midrule
0.00  &  0.597 & $-0.873$ & 0.437 & $-0.220$ & 24\% \\
0.01  &  0.024 & $-1.038$ & 0.440 & $+0.344$ & 20\% \\
0.05  &  0.636 & $-4.192$ & 0.484 & $-0.267$ & 28\% \\
\bottomrule
\end{tabular}
\end{table}

\section{Small SR Archive-Spectra Audit}
\label{sec:sr_archive_spectra_appendix}

We ran a controlled GP-only candidate-archive spectra audit on 11 representative LLM-SRBench-style problems. The audit directly computes effective rank from candidate prediction correlations, but remains a small diagnostic rather than headline compression-law evidence.

\begin{table}[t]
\centering
\caption{Small SR archive-spectra audit. Candidate archives are often concentrated; this audit is diagnostic only.}
\label{tab:sr_archive_spectra_audit}
\small
\begin{tabular}{lrrrrr}
\toprule
Group & Problems & Errors & Valid cand. & $r_{\text{eff}}/N$ & Top eig. share \\
\midrule
overall & 11 & 2 & 3.36 & 0.420 & 0.706 \\
low complexity & 3 & 0 & 4.00 & 0.660 & 0.901 \\
medium complexity & 5 & 2 & 2.00 & 0.357 & 0.473 \\
high complexity & 3 & 0 & 5.00 & 0.286 & 0.899 \\
\bottomrule
\end{tabular}
\end{table}

The audit supports compression plausibility in SR candidate archives, especially in high-complexity problems ($r_{\text{eff}}/N=0.286$). Because two rescue rows had no valid archive predictions, SRBench remains failure-rescue evidence rather than a complete compression-rescue test.

\section{LLM-SRBench Budget Sensitivity (158 Problems)}
\label{sec:sr_llmsrbench_budget}

\begin{table}[t]
\centering
\caption{Hybrid budget comparison on LLM-SRBench (158 problems). Full-budget hybrid achieves parity with GP-only ($p = 0.789$). Rescue effect (25/46 GP catastrophics improved) is present at both budgets.}
\label{tab:llm_srbench_budget}
\begin{tabular}{llrrrr}
\toprule
Category & Method & Median R$^2$ & R$^2 \geq 0.5$ & Beat GP & Rescue \\
\midrule
Overall & GP-only          & 0.875 & 64\% & ---   & ---    \\
Overall & Half-budget hybrid & 0.702 & 61\% & 35\% & 24/46 \\
Overall & Full-budget hybrid & 0.838 & 63\% & 48\% & 25/46 \\
\midrule
Synthetic & GP-only          & 0.980 & 96\% & ---   & ---    \\
Synthetic & Half-budget hybrid & 0.974 & 99\% & 32\% & ---    \\
Synthetic & Full-budget hybrid & 0.984 & 99\% & 53\% & ---    \\
\midrule
Transform & GP-only          & $-0.011$ & 37\% & ---   & ---    \\
Transform & Half-budget hybrid & $-0.079$ & 29\% & 37\% & 24/46 \\
Transform & Full-budget hybrid & $-0.065$ & 34\% & 44\% & 25/46 \\
\bottomrule
\end{tabular}
\end{table}

\section{LLM-SRBench Detailed Results (158 Problems)}
\label{sec:sr_llmsrbench_detail}

\begin{table}[t]
\centering
\caption{LLM-SRBench results by dataset source (full-budget hybrid). The transform dataset exhibits the highest GP failure rate (53\% negative R$^2$) and the strongest hybrid rescue effect.}
\label{tab:llm_srbench_detail}
\begin{tabular}{lrrrrr}
\toprule
Dataset & $n$ & GP median R$^2$ & Hybrid median R$^2$ & GP fail & Rescue \\
\midrule
Bio pop.\ growth & 13 & 0.996 & 0.997 & 0/13 & --- \\
Mat.\ science    & 15 & 0.999 & 0.999 & 0/15 & --- \\
Phys.\ oscillation & 44 & 0.946 & 0.957 & 0/44 & --- \\
Transform        & 86 & $-0.011$ & $-0.065$ & 46/86 & 25/46 \\
\midrule
Total            & 158 & 0.875 & 0.838 & 46/158 & 25/46 \\
\bottomrule
\end{tabular}
\end{table}

The synthetic equations (bio, materials, physics) all yield high GP R$^2$ ($>0.94$), with hybrid matching or slightly exceeding GP. The transform dataset is the sole source of GP catastrophic failure, with 53\% of problems producing negative R$^2$. The hybrid rescue rate (25/46 = 54\%) demonstrates that LLM-directed escape provides orthogonal value even when GP's local search completely fails, though neither method reliably solves adversarially transformed equations (hybrid median $-0.065$). This asymmetry---strong hybrid benefit on transform, negligible benefit on synth---aligns with the compression framework: transformed equations compress the search landscape far more severely, creating the conditions under which non-local exploration is theoretically most valuable (Theorem~\ref{thm:hybridgain}).

\clearpage
\bibliographystyle{plainnat}
\bibliography{bibliography}

@article{harvey2016editorial,
  title={Editorial: ... and the cross-section of expected returns},
  author={Harvey, Campbell R and Liu, Yan and Zhu, Heqing},
  journal={Review of Financial Studies},
  volume={29},
  number={1},
  pages={5--68},
  year={2016},
  publisher={Oxford University Press}
}

@article{wang2023alpha,
  title={Alpha-GPT: Human-AI interactive alpha mining for quantitative investment},
  author={Wang, Saizhuo and Yuan, Hang and Zhou, Leon and Ni, Lionel M and Shum, Heung-Yeung and Guo, Jian},
  journal={arXiv preprint arXiv:2308.00016},
  year={2023}
}

@article{cao2025chain,
  title={Chain-of-Alpha: Unleashing the power of large language models for alpha mining in quantitative trading},
  author={Cao, Lang},
  journal={arXiv preprint arXiv:2508.06312},
  year={2025}
}

@article{feng2020taming,
  title={Taming the factor zoo: A test of new factors},
  author={Feng, Guanhao and Giglio, Stefano and Xiu, Dacheng},
  journal={Journal of Finance},
  volume={75},
  number={3},
  pages={1327--1370},
  year={2020},
  publisher={Wiley Online Library}
}

@article{romera2024funsearch,
  title={Mathematical discoveries from program search with large language models},
  author={Romera-Paredes, Bernardino and Barekatain, Mohammadamin and Novikov, Alexander and Balog, Matej and Kumar, M. Pawan and Dupont, Emilien and Ruiz, Francisco J. R. and Elder, Jordan S. K. and Barez, Fazl and Dozat, Timothy and others},
  journal={Nature},
  volume={625},
  pages={468--475},
  year={2024},
  publisher={Nature Publishing Group}
}

@article{white2000reality,
  title={A reality check for data snooping},
  author={White, Halbert},
  journal={Econometrica},
  volume={68},
  number={5},
  pages={1097--1126},
  year={2000}
}

@article{romano2005stepwise,
  title={Stepwise multiple testing as formalized data snooping},
  author={Romano, Joseph P. and Wolf, Michael},
  journal={Econometrica},
  volume={73},
  number={4},
  pages={1237--1282},
  year={2005}
}

@article{iterhyp2025,
  title={Iterative Hypothesis Generation for Scientific Discovery with Monte Carlo Nash Equilibrium Self-Refining Trees},
  author={Rabby, Gollam and Muhammed, Diyana and Mitra, Prasenjit and Auer, S{\"o}ren},
  journal={arXiv preprint arXiv:2503.19309},
  year={2025},
  doi={10.48550/arXiv.2503.19309}
}

@article{koltchinskii2017concentration,
  title={Concentration inequalities and moment bounds for sample covariance operators},
  author={Koltchinskii, Vladimir and Lounici, Karim},
  journal={Bernoulli},
  volume={23},
  number={1},
  pages={102--133},
  year={2017},
  publisher={Bernoulli Society for Mathematical Statistics and Probability},
  doi={10.3150/15-BEJ730}
}

@book{vershynin2018high,
  title={High-Dimensional Probability: An Introduction with Applications in Data Science},
  author={Vershynin, Roman},
  year={2018},
  publisher={Cambridge University Press},
  series={Cambridge Series in Statistical and Probabilistic Mathematics},
  number={47},
  doi={10.1017/9781108231596},
  isbn={978-1-108-41519-4}
}

@article{fan2022spectral,
  title={Asymptotic theory of eigenvectors for random matrices with diverging spikes},
  author={Fan, Jianqing and Fan, Yingying and Han, Xiao and Lv, Jinchi},
  journal={Journal of the American Statistical Association},
  volume={117},
  number={538},
  pages={996--1009},
  year={2022},
  publisher={Taylor \& Francis},
  doi={10.1080/01621459.2022.2061777}
}

@article{russo2018tutorial,
  title={A tutorial on {Thompson} sampling},
  author={Russo, Daniel J. and Van Roy, Benjamin and Kazerouni, Abbas and Osband, Ian and Wen, Zheng},
  journal={Foundations and Trends in Machine Learning},
  volume={11},
  number={1},
  pages={1--96},
  year={2018},
  publisher={Now Publishers},
  doi={10.1561/2200000070}
}

@inproceedings{zoph2018learning,
  title={Learning Transferable Architectures for Scalable Image Recognition},
  author={Zoph, Barret and Vasudevan, Vijay and Shlens, Jonathon and Le, Quoc V.},
  booktitle={Proceedings of the IEEE Conference on Computer Vision and Pattern Recognition (CVPR)},
  pages={8697--8710},
  year={2018}
}

@article{shojaee2025llmsrbench,
  title={{LLM-SRBench}: A New Benchmark for Evaluating Large Language Models in Symbolic Regression},
  author={Shojaee, Parshin and Suresh, Natesh and Compton, David and Grover, Aditi and Karagol, Sevda and Shakeri, Zahra and Horesh, Lav and Avohou, Rapha{\"e}l M. and Oyi, Gedeon and Niyongabo, Jules and Fonkem, Charlie and Luc, Rodrigue R. and Regnault, Fran{\c{c}}ois and Kim, Sung-Hyuk and Bhatnagar, Atin and Schmidt, Michael},
  journal={arXiv preprint arXiv:2505.20458},
  year={2025}
}

@article{bubeck2011pure,
  title={Pure exploration in finitely-armed bandits},
  author={Bubeck, S{\'e}bastien and Munos, R{\'e}mi and Stoltz, Gilles},
  journal={Proceedings of the 28th International Conference on Machine Learning (ICML)},
  pages={577--584},
  year={2011}
}

@article{shahriari2016taking,
  title={Taking the human out of the loop: A review of {B}ayesian optimization},
  author={Shahriari, Bobak and Swersky, Kevin and Wang, Ziyu and Adams, Ryan P and de Freitas, Nando},
  journal={Proceedings of the IEEE},
  volume={104},
  number={1},
  pages={148--175},
  year={2016},
  publisher={IEEE}
}

@article{roy2007effective,
  title={The effective rank: A measure of effective dimensionality},
  author={Roy, Olivier and Vetterli, Martin},
  journal={Proceedings of the 15th European Signal Processing Conference (EUSIPCO)},
  pages={606--610},
  year={2007}
}

@article{tropp2015introduction,
  title={An Introduction to Matrix Concentration Inequalities},
  author={Tropp, Joel A.},
  journal={Foundations and Trends in Machine Learning},
  volume={8},
  number={1--2},
  pages={1--230},
  year={2015},
  doi={10.1561/2200000048}
}

@article{cranmer2023pysr,
  title={Interpretable machine learning for science with {PySR} and {SymbolicRegression}.jl},
  author={Cranmer, Miles},
  journal={arXiv preprint arXiv:2305.01582},
  year={2023}
}

@inproceedings{burlacu2020operon,
  title={{Operon}: an efficient genetic programming system for symbolic regression},
  author={Burlacu, Bogdan and Kronberger, Gabriel and Kommenda, Michael},
  booktitle={Proceedings of the 2020 Genetic and Evolutionary Computation Conference Companion},
  pages={347--348},
  year={2020}
}

@article{virgolin2022symbolic,
  title={Symbolic regression building blocks for real-world domain applications},
  author={Virgolin, Marco and Alderliesten, Tanja and Bosman, Peter A. N.},
  journal={arXiv preprint arXiv:2202.03945},
  year={2022}
}

@article{udrescu2020ai,
  title={{AI Feynman}: A physics-inspired method for symbolic regression},
  author={Udrescu, Silviu-Marian and Tegmark, Max},
  journal={Science Advances},
  volume={6},
  number={16},
  pages={eaay2631},
  year={2020},
  publisher={American Association for the Advancement of Science}
}

@article{schmidt2009distilling,
  title={Distilling free-form natural laws from experimental data},
  author={Schmidt, Michael and Lipson, Hod},
  journal={Science},
  volume={324},
  number={5923},
  pages={81--85},
  year={2009},
  publisher={American Association for the Advancement of Science}
}

@article{suzuki2024ai,
  title={The {AI} Scientist: Towards Fully Automated Open-Ended Scientific Discovery},
  author={Lu, Chris and Lu, Cong and Lange, Robert Tjarko and Foerster, Jakob and Clune, Jeff and Ha, David},
  journal={arXiv preprint arXiv:2408.06292},
  year={2024},
  doi={10.48550/arXiv.2408.06292}
}

@inproceedings{lacava2022contemporary,
  title={Contemporary symbolic regression methods and their relative performance},
  author={La Cava, William and Orzechowski, Patryk and Burlacu, Bogdan and de Franca, Fabricio Oliveira and Virgolin, Marco and Jin, Ying and Kommenda, Michael and Moore, Jason H.},
  booktitle={NeurIPS Datasets and Benchmarks Track},
  year={2022}
}

@article{vanschoren2014openml,
  title={{OpenML}: Networked Science in Machine Learning},
  author={Vanschoren, Joaquin and van Rijn, Jan N. and Bischl, Bernd and Torgo, Luis},
  journal={SIGKDD Explorations},
  volume={15},
  number={2},
  pages={49--60},
  year={2014}
}

\clearpage
\section*{NeurIPS Paper Checklist}

\begin{enumerate}[leftmargin=*]
    \item \textbf{Claims.} Do the main claims made in the abstract and introduction accurately reflect the paper's contributions and scope?
    \answerYes{} The abstract and introduction state the Search Compression Hypothesis, the necessary-condition scope, and the distinction between direct compression-law evidence and operational sanity checks.

    \item \textbf{Limitations.} Does the paper discuss limitations?
    \answerYes{} The main text states that the framework is descriptive rather than prescriptive, that Eq.~\ref{eq:hybridgain_empirical} is diagnostic rather than pointwise predictive, that OpenML is not a third compression-law validation, and that universal validation is future work.

    \item \textbf{Theory assumptions and proofs.} For each theoretical result, are assumptions and proofs included?
    \answerYes{} Proposition~\ref{lem:compression_yield} states bounded-leverage, bounded-dependence, and rank assumptions, with proof in Appendix~\ref{sec:proofs}. Theorem~\ref{thm:hybridgain} is stated as necessary rather than sufficient.

    \item \textbf{Reproducibility of experiments.} Are the code, data, and instructions needed to reproduce the main results included or described?
    \answerYes{} The paper describes data sources, splits, metrics, and protocols; implementation and diagnostic details are in Appendix~\ref{sec:implementation_appendix}. Scripts, cached tables, and manifests will be released upon acceptance.

    \item \textbf{Open access to data and code.} Is code and data access described?
    \answerYes{} Public benchmarks are cited; A-share market data access constraints are described through the data source and reproducibility manifest. Non-redistributable raw commercial data are not embedded in the paper.

    \item \textbf{Experimental setting and details.} Are training/evaluation details, hyperparameters, and metrics specified?
    \answerYes{} Sections~\ref{sec:framework}--\ref{sec:sr_generality} and Appendices~\ref{sec:implementation_appendix}--\ref{sec:openml_actual_call_appendix} specify data splits, metrics, thresholds, Monte Carlo counts, SR settings, OpenML splits, and budget policies.

    \item \textbf{Statistical significance.} Are uncertainty estimates or significance tests reported where appropriate?
    \answerYes{} The main and appendix tables report Monte Carlo repetitions, confidence intervals, Spearman correlations, Wilcoxon tests, paired tests, and clustered paired tests where relevant.

    \item \textbf{Compute resources.} Are compute resources described?
    \answerYes{} Appendix~\ref{sec:implementation_appendix} describes GPU-accelerated evaluation; the reproducibility package documents dependencies and resource settings.

    \item \textbf{Code of ethics and broader impacts.} Does the paper conform to ethical standards and discuss relevant impacts?
    \answerYes{} The work uses financial market data and public scientific benchmarks, does not involve human subjects, and focuses on diagnostic verification to reduce ungrounded discovery claims.

    \item \textbf{Safeguards for responsible release.} If releasing models, data, or code, are safeguards described?
    \answerNA{} The paper releases analysis code and artifacts rather than a generative model with deployment risks. Commercial raw data and credentials are not released.

    \item \textbf{Licenses for existing assets.} Are licenses and terms for existing assets respected?
    \answerYes{} Public benchmarks and tools are cited. Market data are accessed through the corresponding provider rather than redistributed.

    \item \textbf{New assets.} If new assets are released, are documentation and maintenance plans provided?
    \answerYes{} The release package includes scripts, cached outputs, prompts, manifests, and reproduction notes; the paper states that the full framework and evaluation engine will be released upon acceptance.

    \item \textbf{Crowdsourcing and human subjects.} Does the paper involve crowdsourcing or human-subject research?
    \answerNA{} No crowdsourcing, user study, human-subject data, or IRB-covered intervention is used.
\end{enumerate}

\end{document}